\def \p {\mathbf{p}}
\def \w {\mathbf{w}}
\def \z {\mathbf{z}}
\def \x {\mathbf{x}}
\def \f {\mathbf{f}}
\def \D {\mathcal{D}}
\def \E {\mathbb{E}}
\def \R {\mathbb{R}}
\def \G {\mathcal{G}}
\def \u {\mathbf{u}}
\def \v {\mathbf{v}}
\def \q {\mathbf{q}}
\newtheorem{assumption}{Assumption}
\newtheorem{theorem}{Theorem}
\newtheorem{lemma}{Lemma}
\newtheorem{prop}{Proposition}
\title{Attentional-Biased Stochastic Gradient Descent}
\author{\name Qi Qi$^1,$ Yi Xu$^2,$ 
 Wotao Yin$^2,$ Rong Jin$^3,$ Tianbao Yang$^4$* \\  
\noindent
\email qi-qi@uiowa.edu$,$ statxy@gmail.com$,$ wotao.yin@alibaba-inc.com$,$ rongjinrmail@gamil.com$,$ tianbao-yang@tamu.edu.cn \\
      \addr $^1$Department of Computer Science, The University of Iowa\\
      \addr 
      $^2$Alibaba Group\\
      \addr $^3$Meta Inc \\
      \addr $^4$Department of Computer Science and Engineering, Texas A\&M University \\
      \addr
*corresponding author \\
      }
\begin{document}

\maketitle

\begin{center}
First Version\footnote{Compared with first version, we added convergence analysis for ABAdam, and provided more experimental results for label
noise problems}: 10 Oct, 2021
\end{center}

\begin{abstract}
In this paper, we present a simple yet effective provable method (named ABSGD) for addressing the data imbalance or label noise problem in deep learning. Our method is a simple modification to momentum SGD where we assign an individual importance weight to each sample in the mini-batch.
 The individual-level weight of sampled data is systematically proportional to the exponential of a scaled loss value of the data, where the scaling factor is interpreted as the regularization parameter in the framework of distributionally robust optimization (DRO).
Depending on whether the scaling factor is positive or negative, ABSGD is guaranteed to converge to a stationary point of an information-regularized min-max or min-min  DRO problem, respectively. Compared with existing class-level weighting schemes, our method can capture the diversity between individual examples within each class. Compared with existing individual-level weighting methods using meta-learning that require three backward propagations for computing mini-batch stochastic gradients, our method is more efficient with only one backward propagation at each iteration as in standard deep learning methods.  ABSGD is flexible enough to combine with other robust losses without any additional cost. Our empirical studies on several  benchmark datasets demonstrate the effectiveness of the proposed method.\footnote{Code is available at:\url{https://github.com/qiqi-helloworld/ABSGD/}}
\end{abstract}



%

\section{Introduction}

Deep Learning (DL) has emerged as the most popular machine learning technique in recent years. It has brought transformative impact in industries and quantum leaps in the quality of a wide range of everyday technologies including face recognition~\citep{schroff2015facenet,taigman2014deepface,parkhi2015deep,wen2016discriminative,liu2019large, qi2023improving}, speech recognition~\citep{graves2013speech,chung2014empirical,kim2014convolutional,graves2013generating,ravanelli2018light} and machine translation~\citep{cho2014learning,bahdanau2014neural,sutskever2014sequence,luong2015effective,vaswani2018tensor2tensor}. Most of these systems are built based on learning a deep neural network (DNN) model from {a huge amount of  data}. However, it has been observed that these deep learning systems could {fail in some cases} caused by undesirable data distribution, such as data imbalance~\citep{johnson2019survey,lin2017focal,chan2019application,fernandez2018smote,huang2019deep} and noisy labels in the dataset~\citep{fanimpact, herzig2013s, kim2011dealing}.
To be more specific, for example, Apple's FaceID (a face recognition system) is much less accurate for recognizing a child than an adult~\citep{bud2018facing}, and an autonomous driving car might fail at night under the same road condition~\citep{wakabayashi2018self}. The key factors that cause these problems are (i) the training data sets collected from the real-world are usually follows a {highly skewed distribution} (e.g., the number of facial images of children are much less than that of adults), and/or contain  noisily labelled samples due to the inaccurate annotation process~\citep{DBLP:journals/corr/ZhangBHRV16}; (ii) current deep learning systems are not robust enough to overcome negative influence incurred by the real-world imperfect data  as
most existing deep learning techniques in the literature are crafted and evaluated on well-designed benchmark datasets with balanced distributions among different classes (e.g., ImageNet data for image
classification).

Extensive studies have shown that standard empirical risk minimization (ERM) is not sufficient to address the above deficiencies in training on large-scale datasets. Specifically, ERM can lead to biased models toward the majority class and perform poorly in predicting minority class for data imbalance problems~\citep{he2009learning, sun2007cost, chawla2002smote, batista2004study}. Similarly, label noise can result in incorrect empirical risks during training~\citep{natarajan2013learning, wang2018iterative, zhang2021understanding}.
 To overcome this, recent studies have been explored to learn to improve model robustness to overcome the above deficiencies. Those studies can be divided into two directions, data manipulation, and robust learning.
Popular data manipulation methods include under/over-sampling based approaches~\citep{chawla2002smote,han2005borderline,chawla2009data} for data imbalance problem and label correction methods~\citep{xiao2015learning,vahdat2017toward, lee2018cleannet, veit2017learning} for label noise problem, etc.
Existing studies in these methods are not very successful for deep learning with big data. For example, several studies have found that over-sampling yields better performance than using under-sampling~\citep{johnson2019survey}. But over-sampling will add more examples to the training data, which will lead to increased training time.
While the label correction methods~\citep{xiao2015learning,vahdat2017toward, lee2018cleannet, veit2017learning} usually require extra clean data that are expensive to collect.

Robust learning methods include robust weighting and robust loss, where robust weighting assigns weights to different losses of individual data which are either hand-crafted or learned, and robust loss refers to new loss functions that are heuristic-driven or theoretically inspired for addressing data imbalance or label noise issues. The existing robust weighting methods either require significant tuning or suffer from significant computational burden.  In this paper, we propose a simple yet systematic {\bf attentional-biased stochastic gradient descent} (ABSGD) method for addressing the class imbalance or the label noise problem in a unified framework, which falls in the category of robust weighting methods. ABSGD is a simple modification of the popular momentum SGD method for deep learning by injecting individual-level importance weights to stochastic gradients in the mini-batch. These importance weights allow our method either to focus on examples from the minority classes for the data imbalance problem or the clean samples for the label noise problem. 
This idea is illustrated in Figure~\ref{fig:toy-example} on a toy imbalanced dataset by comparing it with the standard momentum SGD method for deep learning.   Unlike existing meta-learning methods for learning individual-level weights, our individual-level weights are self-adaptive that are computed based on the loss value of each individual data. In particular, the weight for each example is proportional to exponential of a scaled loss value on that example. The weighting scheme is {\bf grounded in the theoretically justifiable distributionally robust optimization (DRO) framework}.


 \begin{figure}[t]
    \centering
   \begin{subfigure}[b]{0.23\textwidth}\includegraphics[width=\linewidth]{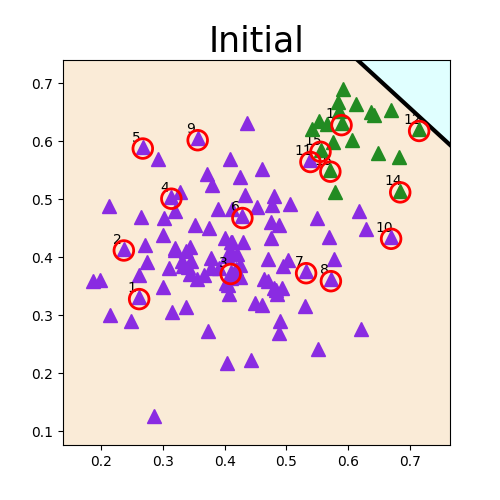}
   \vspace{-0.25in}
   \subcaption{}
   \end{subfigure}
    \begin{subfigure}[b]{0.23\textwidth} \includegraphics[width=\linewidth]{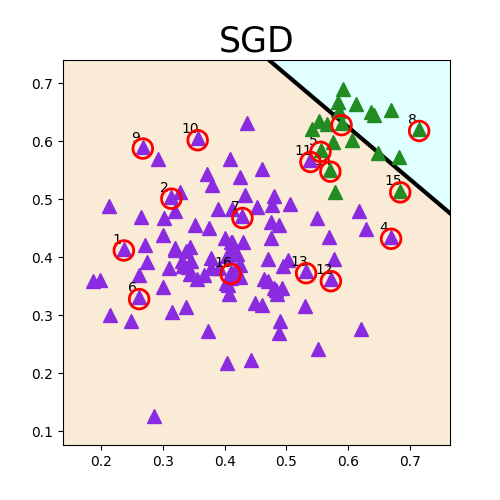}
    \vspace{-0.25in}
     \subcaption{}
\end{subfigure}
\begin{subfigure}[b]{0.23\textwidth}
\includegraphics[width=\linewidth]{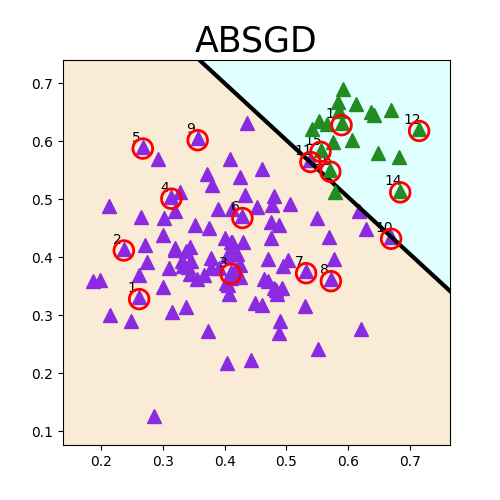}
\vspace{-0.25in}
     \subcaption{}
\end{subfigure}
\begin{subfigure}[b]{0.23\textwidth}
\includegraphics[width=\linewidth]{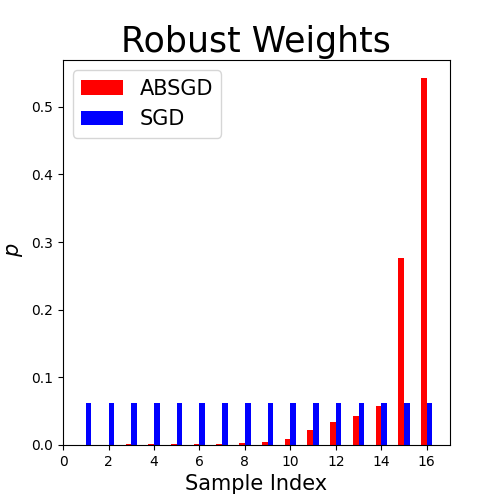}
\vspace{-0.25in}
      \subcaption{}
\end{subfigure} 
\vspace{-0.1in}
  \caption{ \textbf{(a)}: A synthetic data for imbalanced binary classification (green vs purple)  with a random linear decision boundary (black line).
\textbf{(b), (c)}: Learned linear models optimized by SGD and ABSGD with logistic loss  for 100 iterations, respectively.   
 \textbf{(d)}: The averaged weights of  circled samples in the training process of SGD and ABSGD. } \label{fig:toy-example}
\end{figure}

Specifically, our method can be considered as a stochastic momentum method for solving an information-regularized distributionally robust optimization (IR-DRO)  problem defined on all possible data~\citep{zhu2019robust}. From this perspective, our method has several unique features.  (i) The weights for all examples in the mini-batch have a proper normalization term to ensure the method optimizes the IR-DRO problem, which is updated online. We prove a theorem to show that our method converges to a stationary solution of the non-convex IR-DRO problem (with a certain convergence rate).  (ii) The scaling factor before the loss value in the exponential function is interpreted as the regularization parameter in the DRO framework. 
In addition, our method has two benefits: (i) it is applicable in  online learning, where the data is received sequentially; (ii)  it is loss independent, and can be combined with all existing loss functions crafted for tackling data imbalance and label noise. Finally, we summarize {\bf our contributions} below: 
\begin{itemize}
\item We propose a simple robust stochastic gradient descent method with momentum and self-adaptive importance weighting to tackle deep learning tasks with imbalanced data or label noise, which is named as \textbf{ABSGD}. ABSGD can be generalized to a broader family of AB methods that employ other updating methods, e.g., AB-ADAM that uses the ADAM scheme to update the model parameter.  
\item We prove that ABSGD finds a stationary solution of a non-convex IR-DRO problem for learning a deep neural network, and establish its convergence rate. 
\item We compare ABSGD with a variety of existing techniques for addressing the data imbalance and label noise problems, including crafted loss functions, class-balancing weighting methods, individual-level weighting meta-learning methods, and demonstrate superb performance of ABSGD.
\end{itemize}

\section{Related Work}

\noindent {\bf Class-level Weighting}. The idea of class-level weighting is to introduce weights to examples at the class level to balance the contributions from different classes. This idea is rooted in cost-sensitive classification in machine learning~\citep{zhou2005training,sun2007cost,conf/icml/Scott11,DBLP:conf/nips/ParambathUG14,narasimhan2015optimizing,Elkan:2001:FCL:1642194.1642224,busa2015online,DBLP:conf/aaai/YanYan}.  Traditional cost-sensitive methods typically tune the class-level weights. Recently, a popular approach is to  set the class-wise weights to be proportional to the  inverse of  class
sizes~\citep{huang2016learning, yin2018feature}.  \cite{cui2019class} proposed an improved class-level weighting scheme  according to inverse of the  ``effective number'' of examples per class.  
It is also notable that over/under-sampling methods have the same effect of introducing  the class-level weighting to the training algorithm. We can see that these class-level weighting schemes usually require certain knowledge about the size (distribution) of each class, which makes them not suitable to online learning where the size of each class is not known beforehand.  These methods also neglect the differences between different examples from the same class (cf. Figure~\ref{fig:toy-example}).

\noindent  {\bf Individual-weighting by Meta-Learning}. The individual-level weights learning methods typically use meta-learning to learn the individual-level weights  along with updating the model parameters~\citep{jamal2020rethinking,ren2018learning}. 
The idea is to learn individual-level weights by solving a two-level optimization problem. In particular,
\begin{align*}
    \min_{\theta} & \frac{1}{|\mathcal C|}\sum\limits_{\z_i\in\mathcal C}L(\w(\theta); \z_i), \ \ \ \text{where} \  \w(\theta)=\arg\min_{\w}  \frac{1}{|\mathcal D|}\sum\limits_{\z_i\in\mathcal D}\theta_i L (\w; \z_i)
    \vspace{-0.1in}
\end{align*}
where $\mathcal D$ denotes the training dataset, $\mathcal C$ denotes a balanced validation dataset, $\w$ denotes the model parameter, $\z_i$ denotes a data, $L(\w; \z)$ denotes the loss value of model $\w$ on data $\z$,  and $\theta=(\theta_1, \ldots, \theta_{|\mathcal D|})$ denotes the weights on the training examples.~\cite{ren2018learning} directly optimized the individual weights in the framework of meta-learning with a heuristic trick by normalizing the weights of all examples in a training batch so that they sum up to one.~\cite{jamal2020rethinking} considered the problem from the perspective of domain adaptation and decomposed the individual weight into sum of a non-learnable class-level weight and a learnable individual-level weight. One issue of these meta-learning methods is that they require three back-propagations at each iteration, which is computationally more expensive than our method that is about the same cost of standard SGD for DL.

\noindent {\bf Crafted Individual Loss Functions}. Some crafted individual loss functions have been proposed for tackling data imbalance or label noise. A popular loss function is known as the focal loss~\citep{lin2017focal}, which is a modification of the standard cross-entropy loss. Specifically, it is defined as $ - (1-p_t)^\gamma\log(p_t)$ where $\gamma>0$ is a tuning parameter, $p_t$ is the estimated probability for the ground-truth class. The focal loss has been observed to be effective for dense object detection and is also widely used for classification with imbalanced data due to its simplicity~\citep{goyal2018focal}. However, the focal loss lacks theoretical foundation. To complement this, \citep{cao2019learning} proposed a theoretically-principled label-distribution-aware margin loss, which injects uneven margins into the cross-entropy loss, where the margin for each class is  proportional to inverse of each class size to the power of $2/5$. For tackling label noise, symmetric losses have been proposed, e.g.,  symmetric cross entropy loss  (SCE)~\citep{wang2019symmetric} and generalized cross entropy loss (TCE)~\citep{zhang2018generalized}. Our method is loss independent and hence can be combined with these existing crafted individual loss functions.

\noindent  {\bf Optimization of DRO}.  DRO is a useful  technique for domain adaptation, which has been shown both theoretically and empirically  promising for learning with imbalanced data~\citep{shalev2016minimizing,namkoong2017variance,qi2019simple, qi2022stochastic}. However, most existing optimization algorithms for DRO are not practical for deep learning, which dims the usefulness of DRO. In the literature, DRO is formulated as ~\citep{rahimian2019distributionally,namkoong2017variance} : 
\begin{equation}
\begin{aligned}
\label{eqn:pd}
\min_{\w\in\R^d}\max_{\p\in\Delta_n}\sum_{i=1}^np_iL(\w; \z_i)  - h(\p, \mathbf 1/n) +  r(\w),
\end{aligned}
\end{equation}
where $\Delta_n=\{\p\in\R^n: \sum_ip_i =1, p_i\geq 0\}$ denotes an $n$-dimensional simplex,  $h(\p, \mathbf 1/n)$ is a divergence measure or constraint between $\p$ and uniform probabilities $\mathbf 1/n$, $r(\w)$ is a standard regularizer on $\w$. We can see  DRO aims to minimize the worst-case loss over all the underlying distribution $\p$ in an uncertainty set specified by $h(\p, \mathbf 1/n)$. Many primal-dual optimization algorithms have been designed for solving the above problem for DL~\citep{rafique2018non,yan2020sharp}. However, the dual variable $\p$ in the above min-max form is an $n$-dimensional variable restricted to a simplex, which makes existing primal-dual optimization algorithms computationally expensive and not applicable for the online setting where the data is coming sequentially. Our method can be considered as a solution to addressing these issues by considering a specific information-oriented regularizer $h(\p,\mathbf 1/n)=\lambda\sum_ip_i\log(np_i)$ that is the KL divergence between $\p$ and uniform probabilities $\mathbf 1/n$, which allows us to transform the min-max formulation into an equivalent minimization formulation with a compositional objective. From this perspective, our method resembles a recently proposed  dual-free algorithm RECOVER~\citep{qi2020practical}. However, RECOVER requires computing stochastic gradients at two different points in each iteration, which causes their GPU cost to double ours. In addition, RECOVER is a variance-reduction method,  which might have poor generalization performance. Several recent studies also proposed stochastic algorithms for DRO~\citep{qi2022stochastic, duchi2021learning, levy2020large, jin2021non, levy2020large, amid2019robust}, which are arguably more complicated than our methods. 

It was brought to our attention that several papers have developed algorithms based on certain formulations of DRO for tackling noisy data and/or imbalanced data. \cite{li2021tilted} investigated the effectiveness of optimizing the KL regularized DRO objective in dealing with class imbalance, which is similar to our paper. The difference from this work is that our algorithm is simpler which only uses one mini-batch of samples per-iteration. In contrast, their algorithm requires two independent mini-batches for updating the model parameter.  \cite{DBLP:journals/corr/abs-2104-01493} proposed an Exponentiated Gradient (EG) reweighting method to optimize the  min-min DRO formulation~(\ref{eqn:minpd-KL}) to handle the label noise problem. Unlike that in our algorithm, the normalization term for computing the weights in the mini-batch is simply calculated from the mini-batch, which does not provide any convergence guarantee for solving the min-min DRO formulation. \cite{kumar2021constrained} proposed Constrained Instance Weighting (CIW) method that is similar to EG to optimize $f$-divergence min-min DRO, however, no theoretical guarantees have been provided. Later on,  \cite{bar2021multiplicative} proposed a different algorithm for solving a min-min DRO formulation such that the weights in a constrained simplex $\{\sum_{i=1}^np_i=1, 0\leq p_i\leq \mu/n\}$. Their algorithm requires periodic projection onto the constrained simplex, which takes $O(n^2)$ complexity when $\mu<n$ and $O(n)$ complexity when $\mu=n$, where $n$ is the size of the training set. They established  a convergence rate of $\sqrt{\frac{n}{B\mathcal T}}$, where $B$ denotes the batch size, and $\mathcal T$ denote, which is  worse than the rate of our proposed algorithm by a factor of $n/B$.


\section{Attentional-biased SGD with Momentum (ABSGD)}
In this section, we present the proposed method ABSGD and its analysis. We first describe the algorithm and then connect it to the DRO framework. Then we present the convergence result of our method for solving IR-DRO. Throughout this paper, we let $\z=(\x, y)$ denote a random sample that includes an input $\x\in\R^{d'}$ and the class label $y\in\{1,\ldots, K\}$, $\w\in\R^d$ denote the weight of the underlying DNN to be learned. Let $\mathbf f(\x)\in\R^K$ be the prediction score of the DNN on data $\x$,  and $\ell(\f; y)$ denote a loss function. For simplicity, we let $L(\w; \z) = \ell(\f(\x); y)$. A standard loss function is the cross-entropy loss where $\ell(\f; y) = - \log\frac{\exp(f_y(\x))}{\sum_{k=1}^K\exp(f_k(\x))}$. We emphasize that our method is loss independent, and can be applied with any loss functions $\ell(\f; y)$. Specifically, ABSGD can employ the class-level weighted loss functions such as the class-balanced loss~\citep{cui2019class}, crafted individual loss functions such as label-distribution aware margin loss~\citep{cao2019learning}.
\subsection{Algorithm}
The proposed algorithm ABSGD is presented in Algorithm~\ref{alg:ABSGD}. The key steps are described in Step 2 to Step 6, and the key updating step for $\w_{t+1}$ is given by 
\begin{equation}\label{eqn:update}
\begin{aligned}
\textbf{ABSGD:}\ \ \ \ \w_{t+1}   =& \w_t - \eta \bigg(\frac{1}{B}\sum\limits_{i=1}^B \widetilde{p}_i\nabla L(\w_t;\z_i) + \nabla r(\w_t)\bigg) + \beta (\w_{t} - \w_{t-1})
\end{aligned}
\end{equation}
where $r(\w) \propto 1/2\|\w\|_2^2$ denotes a standard $\ell_2$ norm regularization (i.e., for contributing weight decay in the update).  The above update is a simple modification of the standard momentum method~\citep{polyak1964some}, where the last term  $\beta (\w_{t} - \w_{t-1})$ is a momentum term. The modification lies at the introduced weight $\widetilde p_i$ for each data $\z_i$ in the mini-batch. The individual weight  $\widetilde p_i$ is computed in Step 7 and is proportional to $\exp(L(\w_t ;\z_i)/\lambda)$, where $\lambda$ is a scaling parameter that $\lambda \in \{\R\backslash 0 \}$. Intuitively, we can see that a sample with a large loss value tends to get a higher weight with $\lambda > 0$. It makes sense for learning with imbalanced data since the model tends to fit the data from the majority class while making the loss value larger for the minority class. Hence, the data from the minority class tends to get a larger weight $\widetilde p_i$. This phenomenon is demonstrated on a toy dataset in Figure~\ref{fig:toy-example}. 
Similarly, if $\lambda < 0$,  large value losses have smaller weights. As the noisy samples incurs larger losses than the clean samples, $\widetilde p_i$ would further emphasize more on the clean samples with larger weights, hence $\lambda<0$ is preferred in the presence of label noise.

\begin{algorithm}
    \centering
    \caption{ABSGD ($ \lambda, \eta, \gamma, \beta, s_0, \w_0, T$)}
\label{alg:ABSGD}
  \begin{algorithmic}[1]
   \For{$t = 0,\cdots, T-1$}
    \State Sample/Receive a mini-batch of $B$ samples $\{\z_1,\cdots,\z_B\}$
    \State Compute $\tilde{g}(\w_t) = \frac{1}{B}\sum_{i=1}^B\exp(L(\w_t,\z_i)/\lambda) $
     \State Compute $s_{t+1} = (1-\gamma)s_t + \gamma \tilde{g}(\w_t)$
     \State Compute $ \widetilde{p}_i = \frac{\exp(\frac{L(\w_t;\z_i)}{\lambda})}{s_{t+1}}$, for $i = 1,\ldots, B$
     \State Update $\w_{t+1}$ by Equation (\ref{eqn:update})
    \EndFor
    \State {\bf Return} $\w_{T}$
  \end{algorithmic}
    \end{algorithm} 

It is notable that the weight $\widetilde p_i$ is properly normalized by dividing a quantity $s_{t+1}$ that is updated online. In particular, $s_{t+1}$ maintains a moving average of the exponential of the scaled loss value on the sampled data (Step 4). It is notable that the normalization does not make the sum of  $\widetilde p_i$ in the mini-batch equal to $1$. We emphasize that this normalization is essential in twofold: (i) it stabilizes the update without causing a significant numerical issue; (ii) it ensures the algorithm converges to a meaningful solution as presented in the next subsection.

\subsection{Connection with Min-max or Min-min Robust Optimization}
In the next subsection, we will show that ABSGD converges to a stationary solution of two robust optimization problems depending on whether $\lambda$ is positive or negative.  In particular, given $n$ training samples $\{\z_1, \ldots, \z_n\}$ we consider the following min-max and min-min robust optimization:
\begin{equation}
\begin{aligned}
\label{eqn:pd-KL}
\min_{\w\in\mathcal R^d} & \underbrace{\max_{\p\in\Delta_n}\sum_{i=1}^n p_iL(\w; \z_i)  -  \tau \sum\limits_i^n p_i \ln(n p_i)}_{F_\tau^{(1)}(\w)} + r(\w). \\
\end{aligned}
\vspace{-0.05in}
\end{equation}
\vspace{-0.05in}
\begin{equation}
\begin{aligned}
\label{eqn:minpd-KL}
\min_{\w\in\mathcal R^d} & \underbrace{\min_{\p\in\Delta_n}\sum_{i=1}^n p_iL(\w; \z_i)  +   \tau \sum\limits_i^n p_i \ln(n p_i)}_{F^{(2)}_\tau(\w) } + r(\w)
\end{aligned}
\end{equation}
where $\tau>0$ and $\Delta_n$ is a simplex.  In the Appendix, we show that $F^{(1)}_\tau(\w) = \tau \log \frac{1}{n}\sum_i \exp(L(\w; \z_i)/\tau) + r(\w)$ and  $F^{(2)}_\tau(\w) = - \tau \log\frac{1}{n} \sum_i \exp(-L(\w; \z_i)/\tau) + r(\w)$. Similar min-max and min-min formulations have been considered in the literature under the framework of tilting log-likelihood~\citep{10.1093/biomet/87.2.453}. Recently, there is some  renaissance of solving the min-max and min-min formulation in machine learning. For example, the min-max formulation~(\ref{eqn:pd-KL}) is also closely related to distributionally robust optimization~\citep{namkoong2017variance} with a difference that a regularization is imposed on $\p$ instead of a constraint function. The min-min formulation has been considered in~\citep{DBLP:journals/corr/abs-2104-01493} for tackling  noisy data. Recently, the titled risk functions $F^{(1)}_\tau(\w)$ and $F^{(2)}_\tau(\w)$ have been also studied in~\citep{li2021tilted}. We describe the difference between ABSGD and the algorithm in~\citep{li2021tilted} in detail in section~\ref{sec:convergence-analysis}.

By considering the explicit $\tau\sum_ip_i\log(np_i)$ regularizer in the two DRO formulations, our algorithm is applicable to solving the min-max objective~(\ref{eqn:pd-KL}) by setting $\lambda=\tau$  and the min-min objective~(\ref{eqn:minpd-KL}) by setting $\lambda=-\tau$. 
When $\tau=+\infty$, $p_i = 1/n$ according to the close form solution derived in Eqn~(\ref{eqn:p_i}). Then above DRO objectives, Eqn~(\ref{eqn:pd-KL}) and~(\ref{eqn:minpd-KL}), become the standard empirical risk minimization problem: $\min_{\w\in\R^d}\frac{1}{n}\sum_{i=1}^nL(\w; \z_i)+ r(\w)$.
When $\tau = 0$, then $\p$ has only one component equal to $1$ that corresponds to the data with largest loss value for Eqn~(\ref{eqn:pd-KL}) and the data with smallest loss value for Eqn~(\ref{eqn:minpd-KL}). Hence, when $\tau \rightarrow 0$, DRO objective~(\ref{eqn:pd-KL}) becomes the maximal loss minimization: $\min_{\w\in\R^d}\max_{i}L(\w; \z_i)  + r(\w)$. And when $\tau \rightarrow 0$,
DRO objective~(\ref{eqn:minpd-KL}) becomes the minimal loss minimization: $\min_{\w\in\R^d}\min_{i}L(\w; \z_i)  + r(\w)$.
The above maximal loss minimization has been studied for learning with imbalanced data~\citep{shalev2016minimizing}. It was shown theoretically
to yield better generalization performance than empirical risk minimization for imbalanced data.
However, the maximal loss minimization is sensitive to outliers. Hence, by varying the value
of $\tau$ we can enjoy the balanced robustness between the imbalanced data and outliers.

\subsection{Optimization Analysis}
It is nice that the DRO formulation is robust to imbalanced data (Eqn~(\ref{eqn:pd-KL})) and noisy data (Eqn~(\ref{eqn:minpd-KL})). However, the min-max/min formulation of DRO is not friendly to the design of efficient optimization algorithms, especially given the constraint $\p\in\Delta_n$. To this end, we  transform the min-max/min formulation~(\ref{eqn:pd-KL}) and~(\ref{eqn:minpd-KL}) into an equivalent minimization formulation following~\citep{qi2020practical}. In particular, we first compute the optimal solution of dual variable $\p^*$ for the inner maximization/minimization problem given $\w$. By taking the first-derivation of equation~(\ref{eqn:pd-KL}) and~(\ref{eqn:minpd-KL}) in terms of $\p$ and setting it to zero, $i.e.$ $\nabla_\p F(\w, \p) = 0$, we have $\p^*$:
\begin{equation}
 \begin{aligned}
\label{eqn:p_i}
p^*_i = \frac{\exp(\frac{L(\w;\z_i)}{\lambda})}{\sum\limits_{i=1}^n \exp(\frac{L(\w;\z_i)}{\lambda})},  \quad i = 1,\ldots,n 
\end{aligned}  
\end{equation}
where $\lambda = \tau$ for equation~(\ref{eqn:pd-KL}) and $\lambda = -\tau$ for equation~(\ref{eqn:minpd-KL}).
By substituting $\p^*$ back, we obtain the following equivalent minimization formulation: 
\begin{equation}\label{eqn:minDRO}
\begin{aligned}
\min_{\w\in \R^d}F_\lambda(\w)=\lambda\log\left(\frac{1}{n}\sum_{i=1}^n\exp\left(\frac{L(\w; \z_i)}{\lambda}\right) \right) + r(\w).
\end{aligned}
\end{equation}
In an online learning setting, we can further generalize the above formulation as 
\begin{equation}\label{eqn:expDRO}
\begin{aligned}
\min_{\w\in\mathcal \R^d}F_{\lambda}(\w)&=\lambda\log\left(\E_{\z}\exp(L(\w; \z)/\lambda) \right)  + r(\w).
\end{aligned}
\end{equation}
Given the above minimization formulations, our method ABSGD can be considered as a stochastic algorithm for optimizing~(\ref{eqn:minDRO}) in offline learning or optimizing~(\ref{eqn:expDRO}) in online learning. Our method is rooted in stochastic optimization for compositional optimization that has been studied in the literature~\citep{wang2017stochastic,ghadimi2020single,chen2020solving,qi2020practical}. Intuitively, we can understand our weighting scheme $\widetilde \p$ as following. In offline learning with a big data size where $n$ is  huge, it is impossible to calculate the $\p^*$ as in~(\ref{eqn:p_i}) at every iteration  due to computation and memory limits. As a result, we need to approximate $\p^*$ in a systematic way. 

In our method, we use moving average estimate $s_{t+1}$ to approximate the denominator in $\p^*$,  i.e., $\frac{1}{n}\sum\limits_{i=1}^n \exp(\frac{L(\w;\z_i)}{\lambda})$, and use it to compute a scaled  weight of data in the mini-batch by Step 5, i.e., 
\begin{equation}
\begin{aligned}
\label{eqn:approx_p}
    \widetilde{p}_i = \frac{\exp(\frac{L(\w_t;\z_i)}{\lambda})}{s_{t+1}}, \quad i \in  \{1,\cdots B\}.
\end{aligned}
\end{equation}

More rigorously, our method ABSGD is a stochastic momentum method for solving a compositional problem in the form $f(g(\w)) + r(\w)$. To this end, we write the objective in~(\ref{eqn:expDRO}) as $f(g(\w)) + r(\w)$, where  $f(g)=\lambda\log(g)$ and $g(\w) = \E_{\z}[\exp(L(\w; \z)/\lambda)]$. The difficulty of stochastic optimization for the compositional function $f(g(\w))$ lies on computing an approximate gradient at $\w_t$. By the chain rule, its gradient is given by $ \nabla f(g(\w_t))\nabla g(\w_t) = \frac{\lambda}{g(\w_t)} \nabla g(\w_t) $. 
To approximate $\nabla f(g(\w_t)) = \frac{\lambda}{g(\w_t)}$, we use a moving average to estimate $g(\w_t)$ inspired by~\citep{wang2017stochastic}, which is updated in Step 4 of Algorithm~\ref{alg:ABSGD}, i.e., 
\begin{align*}
 s_{t+1} & = (1-\gamma) s_t + \gamma \tilde{g}(\w_t), \quad \text{where}\ \  \tilde{g}(\w_t) = \frac{1}{B}\sum_{i=1}^B\exp(\frac{L(\w_t,\z_i)}{\lambda}),  \{\z_i\} \text{ are random samples}.
\end{align*}
And  $\nabla g(\w_t)$ can be estimated by mini-batch stochastic gradient, i.e., $$\nabla \tilde g(\w_t)=\frac{1}{B}\sum_{i=1}^B\exp(L(\w_t; \z_i)/\lambda)\frac{\nabla L(\w_t; \z_i)}{\lambda}.$$
\noindent Hence, the true gradient $\nabla f(g(\w_t))\nabla g(\w_t)$ is able to be approximated by 
$$\frac{\lambda}{s_{t+1}}\nabla \tilde g(\w_t) = \frac{1}{B}\sum_{i=1}^B\frac{1}{s_{t+1}}\exp(L(\w_t; \z_i)/\lambda)\nabla L(\w_t; \z_i),$$ 
which is exactly the approximate gradient used in the update of $\w_{t+1}$ as in equation~(\ref{eqn:update}). Let us provide an intuition about the benefit of  using $s_{t+1}$ for normalization of weights. Let us consider a simple case such that only one data is sampled for updating. For the imbalanced data setting, if the sampled data denoted by $\z_t$ at the $t$-th iteration is from a minority group and hence has a large loss. We would like to penalize more on such an example. The estimator $s_{t+1}=(1-\gamma)s_t + \gamma \exp(L(\w_t, \z_t)/\lambda)$ is likely to be smaller than $\exp(L(\w_t, \z_t)/\lambda)$ due to $\gamma<1$. As a result, normalization using $s_{t+1}$ will give a larger weight to the sampled minority data compared with using the mini-batch normalization, i.e., $\frac{\exp(L(\w_t, \z_t)/\lambda)}{s_{t+1}}> \frac{\exp(L(\w_t, \z_t)/\lambda)}{\exp(L(\w_t, \z_t)/\lambda)}$. Similarly, in the noisy label setting, if the sampled data is a noisy sample and hence has a large loss, then $\exp(L(\w_t, \z_t)/\lambda)$ would be small due to that $\lambda$ is set to be negative in this case. As a result $s_{t+1}=(1-\gamma)s_t + \gamma \exp(L(\w_t, \z_t)/\lambda)$  is likely to be larger than $\exp(L(\w_t, \z_t)/\lambda)$. Then normalization using $s_{t+1}$ will give a smaller weight to the sampled noisy data compared with using the mini-batch normalization, i.e., $\frac{\exp(L(w_t, x_t)/\lambda)}{s_{t+1}} <\frac{\exp(L(\w_t, \z_t)/\lambda)}{\exp(L(\w_t, \z_t)/\lambda)}$. We can see that in both cases using $s_{t+1}$ for normalization intuitively makes sense.   In Figure 4, we will empirically demonstrate the benefit of our algorithm design with a variant that uses  $\gamma=1$ which is just using the mini-batch normalization.

Finally, we comment on the final update of model parameters. Instead of directly using this gradient estimator to update the model parameter following~\citep{wang2017stochastic}, we employ a momentum update. The reason is that the algorithm in~\citep{wang2017stochastic} has a larger sample complexity, which is $O(1/\epsilon^5)$ for finding an $\epsilon$-stationary point of the objective function (cf. Section~\ref{sec:convergence-analysis}). By using a momentum update as in~(\ref{eqn:update}), we are able to establish an optimal sample complexity.  It is notable that the momentum update can be seen as a simplification of the NASA method proposed in~\citep{ghadimi2020single}, which was designed to address the constrained compositional optimization.

\subsection{Other AB methods}
In light of the discussion about the connection between ABSGD and optimization of IR-DRO, we can generalize ABSGD to employ other updating schemes, e.g., AdaGrad~\citep{duchi2011adaptive}, RMSProp~\citep{ruder2016overview, guo2021novel}, Adam~\citep{kingma2014adam}. Below, we present the ABAdam method. The key idea is to replace the standard mini-batch gradient estimator in Adam by the weighted mini-batch gradient estimator. The key steps of ABAdam are presented below. 
\begin{equation}\label{eqn:update-adam}
    \begin{aligned}
    G(\w_t) &= \frac{1}{B}\sum\limits_{i=1}^B \widetilde{p}_i\nabla L(\w_t;\z_i) \\
\textbf{ABAdam:} \ \ \ \  \v_{t+1} &= \beta_1 \v_{t} + (1-\beta_1)G(\w_t) \ \\ \ \ 
\u_{t+1} & = \beta_2 \u_t + (1-\beta_2)(G(\w_t))^2 \ \ \ \ \\
\w_{t+1} &= \w_t - \eta (\frac{\v_{t+1}}{\sqrt{\u_{t+1}}+G_0} + \nabla r(\w_t)) 
    \end{aligned}
\end{equation}
where $G_0$ is a constant to increase stability,  $\beta_1,\beta_2$ are the constant hyperparameters that  are usually set as 0.9 and 0.999, respectively.
ABAdam could potentially benefit  the applications that Adam has better generalization performance than the SGD \citep{nadkarni2011natural, kang2020natural}. In the appendix, we provide a theoretical analysis for ABAdam for optimizing IR-DRO, and leave the experimental exploration for the future.

\subsection{Convergence Analysis}
\label{sec:convergence-analysis}
In this subsection, we provide a convergence result of ABSGD for solving the min-max or the min-min objective under some standard assumptions in non-convex optimization. For presentation simplicity, we use the notations $g(\w) = \E_{\z}[\exp(L(\w; \z)/\lambda)]$ and $g(\w; \z) = \exp(L(\w; \z)/\lambda)$. We first state a standard assumption~\citep{qi2020practical, wang2017stochastic} and then present our main theorem.  

\begin{assumption}
\label{ass:ass-1}
Let $V_g$, $L_l$ are constant scalars,
\begin{itemize}
    \item For a fixed $\lambda$, there exists $V_g>0$ such that 
       $\E_{\z}[\| g(\w; \z) - g(\w)\|^2] \leq V_g,  \E_{\z}[\| \nabla g(\w; \z) - \nabla g(\w)\|^2] \leq V_g$
 and $ L(\w; \z)$ for any $\z$ is an $L_l$-smooth function, i.e., $\|\nabla L(\w; \z) - \nabla L(\w';\z)\| \leq L_l\| \w - \w' \|, \forall \w,\w'$
 \item For a given $\tau$, there exists $\Delta_0$ such that $F_\tau^{(1)}(\w_1)- \min F_\tau^{(1)}(\w)\leq \Delta_0$ or $F_\tau^{(2)}(\w_1)- \min F_\tau^{(2)}(\w)\leq \Delta_F$.
\end{itemize}
\end{assumption}

\begin{theorem}
\label{thm:main}
Assume assumption~\ref{ass:ass-1} holds and  there exists $C_0, C_1$ such that $\exp(L(\w_t; \z_i)/\lambda)\leq C_0, \|\nabla L(\w_t;\z_i)\|\leq C_1$, for all $\w_t$ and any $\z_i$. Then, 
$ \gamma \leq \frac{\epsilon^2}{3(4G^2+5GV_g)}, \eta = \frac{\gamma^2}{2\sqrt{L_F^2 + 10GL_g^2}},\beta = 1-\gamma$. For $\lambda=\tau>0$, ABSGD ensures that
$E\left[\frac{1}{T}  \sum\limits_{t=1}^{T}\|\nabla F^{(1)}_{\tau}(\w_{t}) \|^2 
    \right] \leq \epsilon^2$ after $T=O(1/\epsilon^4)$ iterations,  and for $\lambda=-\tau<0$ ,  ABSGD ensures that
  $\E\left[\frac{1}{T}  \sum\limits_{t=1}^{T}\|\nabla F^{(2)}_{\tau}(\w_{t}) \|^2 \right]  \leq \epsilon^2$ after $T=O(1/\epsilon^4)$ iterations, where we exhibit the constant in the big O in Appendix. 
\end{theorem}

\noindent
{\bf Remark:} Before ending this section, we present some remarks. First, we notice that in a concurrent work~\citep{li2021tilted}, the authors proposed a similar algorithm to ABSGD without the momentum term, i.e., $\gamma=1$. However, they only prove the convergence for the algorithm with independent mini-batches for $L(\w;\z)$ and $\nabla L(\w; \z')$. In our experiments, we show that the momentum term is important for speeding up the convergence.  In another concurrent work~\citep{DBLP:journals/corr/abs-2104-01493} the authors proposed an algorithm for solving the min-min objective~(\ref{eqn:minpd-KL}). The difference is that in their algorithm the normalization for computing the weight is computed only from the current mini-batch while that in ABSGD depends on all historical data. In addition, \citep{DBLP:journals/corr/abs-2104-01493} provides no convergence analysis for solving the min-min robust optimization problem.


\subsection{Two-stage Training Strategy for $\lambda$}
Since $\lambda$ can be interpreted as  the regularization parameter in IR-DRO, we can understand its impact on the learning of model.  With a larger $|\lambda|$, the IR-DRO is getting closer to ERM and ABSGD is getting close to the standard momentum SGD method without robust weighting. When $|\lambda|=\infty$, the update becomes exactly the same as the standard momentum SGD method. When $|\lambda|$ becomes smaller, the update will focus more on data with larger loss values (e.g., from the minority class). This uneven weighting is helpful to learn a robust classifier. However, it might harm the learning of feature extraction layers. This phenomenon has been also observed in previous works~\citep{cao2019learning,kang2019decoupling}.

{To address this issue, we employ a two-stage training method following the existing literature~\citep{kang2019decoupling}, where in the first stage we employ momentum SGD to learn a basis network, and in the second stage we employ ABSGD to learn the classifier and finetune the feature layers. 
As momentum  SGD is a special case of ABSGD with $|\lambda|= \infty$, the two-stage method is equivalent to running ABSGD with $|\lambda|=\infty$ first and then restarting it  with a decayed $|\lambda|<\infty$. 
In the ablation study, we will show that damping $|\lambda|$ is critical for balancing  the  learning of feature extraction layers and classifier layers. Finally, it is notable that in the second stage, we can fix some lower layers and only fine-tune upper layers using ABSGD.  

\section{Experimental Results on Data Imbalance Problem}
\label{sec:experiments}
We conduct experiments on multiple imbalanced benchmark datasets, including CIFAR-10 (LT), CIFAR-10 (ST), CIFAR-100 (LT), CIFAR-100 (ST), ImagetNet-LT~\citep{liu2019large}, Places-LT~\citep{zhou2017places}, and iNaturelist2018~\citep{iNatrualist18}, and compare ABSGD with several state-of-the-art (SOTA) methods, including meta-learning~\citep{jamal2020rethinking}, class-balanced weighting~\citep{cao2019learning}, and two-stage decoupling methods \citep{kang2019decoupling}. We use the ResNets~\citep{he2016deep} as the main backbone in our experiments. For fair comparison, ABSGD is implemented with the same hyperparameters such as momentum parameter, initial step size, weight decay and step size decaying strategy, as the baseline momentum SGD method. For ABSGD, the moving average parameter $\gamma$ are tuned in $[0.1:0.1:1]$ by default. Without additional mentions, we directly use the results of baselines from their original papers by default. 

\textbf{Datasets:} The original CIFAR-10 and CIFAR-100 data
contain 50,000 training images and 10,000 validation with 10 and 100 classes, respectively. We construct the imbalanced version of training set of CIFAR10, CIFAR100 following the two strategies: Long-Tailed (LT) imbalance~\citep{cao2019learning} and Step (ST) imbalance~\citep{buda2018systematic} with two different imbalance ratio $\rho = 10, \rho = 100$, and keep the testing set unchanged. 
 The imbalance ratio $\rho$ is defined as the ratio between sample sizes of the most frequent and least frequent classes.  The LT imbalance follows the exponentially decayed sample size between different categories. In ST imbalance, the number of examples are both equal within minority classes and majority classes but differs between the majority and minority classes. We denote the imbalanced versions of CIFAR10, CIFAR100 as CIFAR10-LT/ST, CIFAR100-LT/ST according the imbalanced strategies. ImageNet-LT~\citep{liu2019large} is a long-tailed subset of the
original ImageNet-2012 by sampling a subset following the Pareto distribution with the power value $6$.
It has 115.8K images from 1000 categories, which include 4980 for the head class and 5 images for the tail class. 
 The Places-LT dataset was also created by sampling from Places-2~\citep{zhou2017places} using the same strategy as ImageNet-LT. It contains 62.5K training images from 365 classes with an imbalance ratio $\rho =4980/5$. 
 iNaturalist 2018 is a real world dataset whose class-frequency follows a heavy-tail distribution~\citep{iNatrualist18}.
 It contains 437K images from 8142 classes. The long-tail and step imbalance label distribution of the datasets are shown in Figure~\ref{fig:label_distribution}.

 \begin{figure}[htbp]
 \includegraphics[width=0.19\linewidth]{./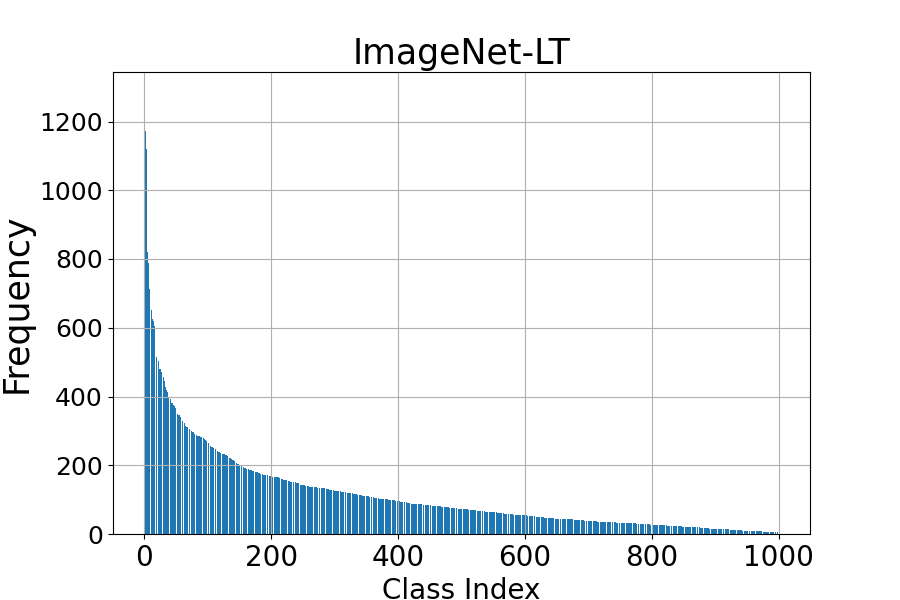}
    \includegraphics[width=0.19\linewidth]{./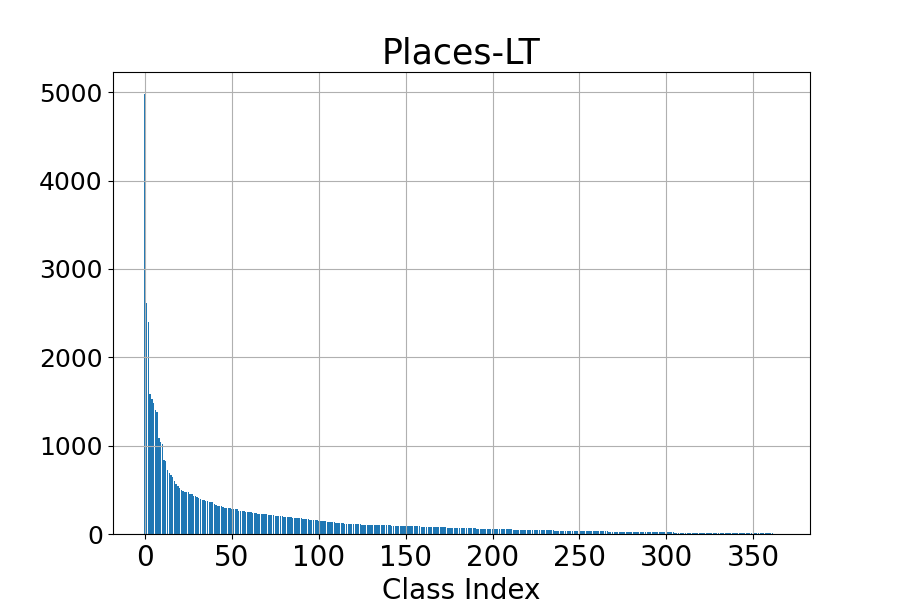}
    \includegraphics[width=0.19\linewidth]{./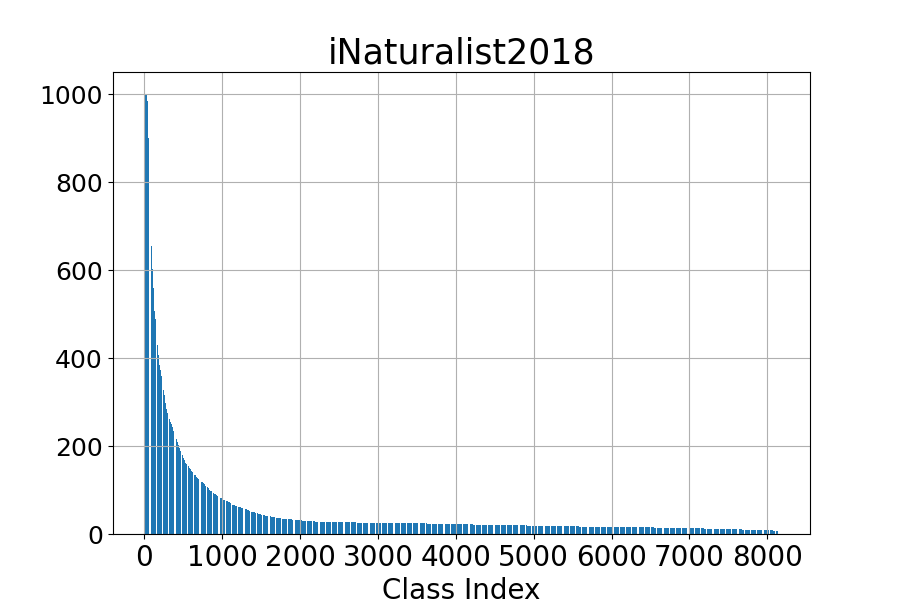}
    \includegraphics[width=0.19\linewidth]{./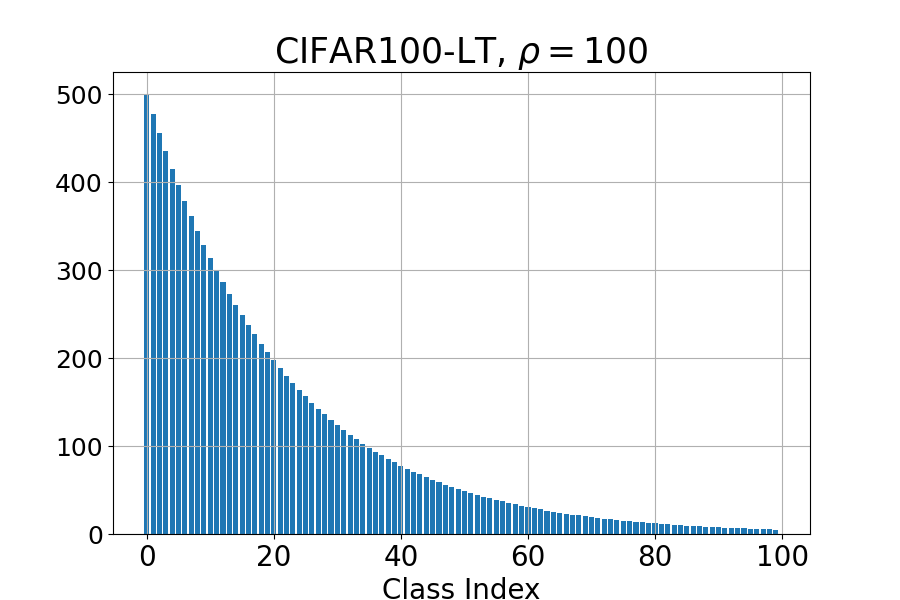}\includegraphics[width=0.19\linewidth]{./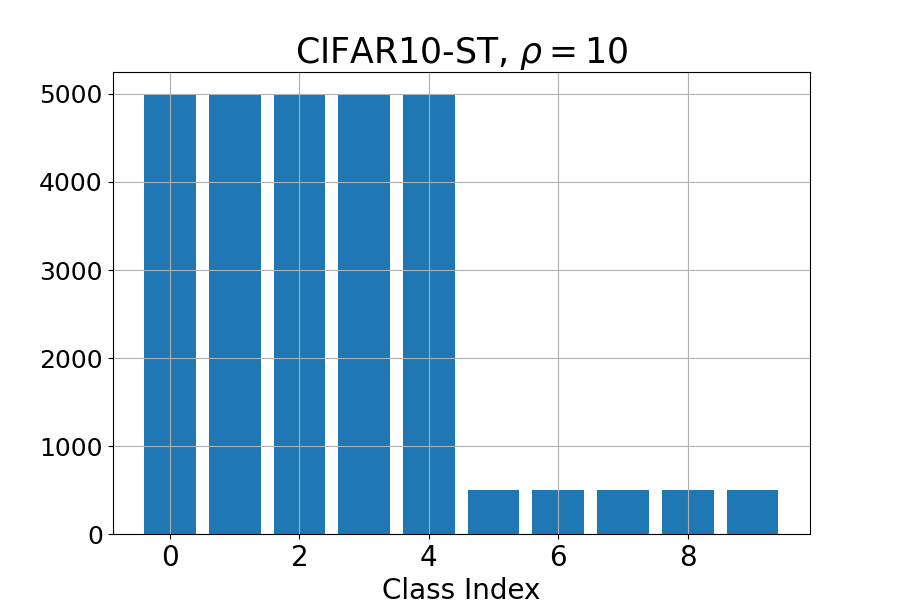}
  \caption{Long-tail label distributions of ImageNet-LT, Places-LT, iNaturalist2018 and CIFAR100 with imbalance ratio $\rho = 100$, and Step imbalance label distribution of CIFAR10 with imbalance ratio $\rho = 10$.}
  \label{fig:label_distribution}
\end{figure}

\noindent\textbf{Label-Distribution Independent Losses.} We first compare the effectiveness of our ABSGD method  with standard momentum SGD method  for DL. In particular, we consider two loss functions, cross-entropy (CE) loss and focal loss. The baseline method is the momentum SGD optimizing these losses, denoted by SGD (CE) and SGD (Focal). Our methods are denoted by ABSGD (CE) and ABSGD (Focal) that employ the two losses in our framework. This comparison is meaningful as in the online learning setting the prior knowledge of class-frequency is not known. 

\noindent\textbf{Label-distribution Dependent Losses.} 
Next, we compare ABSGD with baseline methods that use label-distribution dependent losses. In particular, we consider class-balanced (CB) versions of three individual losses, including CE loss, focal loss, label-distribution-aware margin (LDAM) loss~\citep{cao2019learning}. The class-balanced weighing strategy is from~\citep{cui2019class}, which uses the effective number of samples to define the weight. As a result, there are three categories of CB losses, i.e., CB-CE, CB-Focal, CB-LDAM.  {We use our method ABSGD with these different losses. In particular, ABSGD + CB-CE/Focal/LDAM uses a combination of class-level weighting and instance-level weighting, which is expected to have outstanding performance as it considers diversity between examples at both class level and individual level.} For each of these losses, we consider two baseline optimization methods. The first method is  the standard momentum SGD method with a practical useful trick~\citep{cao2019learning} that defers adding the class-level weighting after a number of pre-training steps with no class-level weights to improve the performance. We denote the first method by SGD (XX), where XX denotes the loss function. The second method is the meta learning method~\citep{jamal2020rethinking} that uses meta-learning on a separate validation data to learn individual weights and combines them with class-balanced weights. The meta learning method has been observed with SOTA results on these benchmark imbalanced datasets.  We let META (XX) denote the second method. Our method is denoted by ABSGD (XX). 

In the following, we compare ABSGD, SGD, and meta-learning methods by optimizing the same label-dependent and label-independent losses on imbalanced CIFAR datasets, and including more baselines on ImageNet-LT, Places-LT, and iNaturalist-LT.

\begin{table}[htbp]
\centering
\caption{Top-1 testing accuracy (\%), mean (std), of ResNet32 on imbalanced CIFAR-10 and CIFAR-100 trained with label-distribution independent losses. The results are reported over 3 independent runs. }
\label{tab:apd-cifar-online}
\begin{tabular}{c|c|cc|cc} 
\hline
 \multirow{2}{*}{Dataset}& Imbalance Type& 
 \multicolumn{2}{c|}{long-tailed (LT)} & \multicolumn{2}{c}{step (ST)}  \\ \cline{2-6}
 & Imbalance Ratio&      100    &   10      &     100      &      10      \\ \hline
\multirow{4}{*}{Cifar10} & SGD (CE)   &   71.75 ($\pm$ 0.75)  &     87.64 ($\pm$ 0.45)    & 63.12  ($\pm$ 0.63)  &     85.23 ($\pm$ 0.41)       \\ 
  &  ABSGD (CE) &  \textbf{72.43} ($\pm$ 0.31) &   \textbf{87.93} ($\pm$ 0.25)  &      \textbf{66.24} ($\pm$ 0.35) & \textbf{85.84}  ($\pm$ 0.27)   \\
&  SGD (Focal)  &   70.86  ($\pm$ 0.68)  &    87.10  ($\pm$ 0.41)     &   63.31 ($\pm$ 0.61)    &     85.55 ($\pm$ 0.46)  \\
& ABSGD (Focal) & \textbf{72.48} ($\pm$ 0.28) &  \textbf{87.26} ($\pm$ 0.35) &    \textbf{65.03}    ($\pm$ 0.33) &     \textbf{85.67} ($\pm$ 0.30)  \\
\hline 
\multirow{4}{*}{Cifar100} & SGD (CE) & 38.35 ($\pm$ 0.63)  &     56.91 ($\pm$ 0.44) &   39.23  ($\pm$ 0.58)  & 55.09   ($\pm$ 0.35)    \\ 
&  ABSGD (CE)& \textbf{39.77} ($\pm$ 0.34) & \textbf{57.44} ($\pm$ 0.25) &  \textbf{39.76}  ($\pm$ 0.37)  &  \textbf{55.15} ($\pm$ 0.29)\\
& SGD (Focal)    &  39.05 ($\pm$ 0.71)   &      56.89 ($\pm$ 0.41)      &  39.32  ($\pm$ 0.61)    & 54.45  ($\pm$ 0.43)    \\ 
& ABSGD (Focal) &  \textbf{39.37}  ($\pm$ 0.38)   &  \textbf{57.08} ($\pm$ 0.29)  &   \textbf{39.75} ($\pm$ 0.39) & \textbf{55.40} ($\pm$ 0.33) \\

\hline
\end{tabular}

\end{table}

\begin{table}[t]
\centering
\caption{Top-1 testing accuracy (\%) of ResNet32 on imbalanced CIFAR-10 and CIFAR-100 trained with label-distribution dependent losses. The \textcolor{red}{red} numbers indicate the best in each category of class-weighted loss. The \textcolor{red}{\bf bold red} numbers indicate the best in each imbalanced setting. The original paper of META does not include the results on the ST imbalanced setting, hence their missing results are marked by $-$.}
\label{tab:mbd-CIFAR-ONLINE}
\resizebox{0.98\linewidth}{!}{
\begin{tabular}{c|cc|cc|cc|ccc} 
\toprule
Datasets & \multicolumn{4}{c|}{Imbalanced CIFAR-10}                        & \multicolumn{4}{c}{Imbalanced CIFAR-100}                        \\ \hline
  Imbalance Type& 
 \multicolumn{2}{c|}{long-tailed} & \multicolumn{2}{c|}{step} & \multicolumn{2}{c}{long-tailed} & \multicolumn{2}{|c}{step} \\ \hline
  Imbalance Ratio&      100    &   10      &     100      &      10   &     100      &     10     &    100       &    10      \\ \hline
 Resampling (CE)   & 71.78 & 86.99& 61.16  &       84.59        &   38.87       &   56.92       &  38.84      &  54.35       \\ \hline
  SGD (CB-CE)~\citep{cui2019class}    &  72.37&  86.77  & 61.84 &     83.80 &      38.70      &     57.56         & 21.31  & 53.39 \\ 
  META (CB-CE)~\citep{jamal2020rethinking} &  76.41  & \textcolor{red}{\bf 88.85} &- &-      &  43.35     &   59.58       &-           &      -   \\ 
   ABSGD (CB-CE)    &\textcolor{red}{79.34}  & 88.57 & \textcolor{red}{72.93}   &      \textcolor{red}{\bf  88.42}        &     \textcolor{red}{\bf 45.54}    &\textcolor{red}{\bf  61.12}       &     \textcolor{red}{\bf 45.89} &  \textcolor{red}{\bf 60.77}   \\ \hline
  SGD (CB-Focal)~\citep{cui2019class}    &74.57  & 87.10  &   60.27   &  83.46                &     36.02     &    57.99         &  19.76     &  50.02  \\ 
 META (CB-Focal) ~\citep{jamal2020rethinking}    & 78.90 &88.37   &  - &    -           &      \textcolor{red}{44.70}    &  \textcolor{red}{ 59.59}        &  - & -   \\ 
ABSGD (CB-Focal)    &  \textcolor{red}{ 79.53} &\textcolor{red}{88.76}  &  \textcolor{red}{76.33}  &   \textcolor{red}{ 85.90}         &    44.11   & 59.14      &  \textcolor{red}{45.41} & \textcolor{red}{59.75}  \\ 

 \hline

SGD (LDAM) \citep{cao2019learning}   &73.35       & 86.69       &   
66.58            &  85.00  &     39.60     &    56.91      &  39.58       &      56.27    \\ 
SGD (CB-LDAM) \citep{cao2019learning}    & 77.03       &      88.12    &  76.92        & 87.81      &   42.04     &  58.71       &  45.36       &    \textcolor{red}{ 59.46}   \\ 
META (CB-LDAM) \citep{jamal2020rethinking} & 80.00    &    87.40  &     -      &  - &      44.08     &  58.80     &  -      & -\\ 
 ABSGD (CB-LDAM)&  \textcolor{red}{\bf 80.45}    &  \textcolor{red}{88.27}   &    \textcolor{red}{\bf 78.33}     &   \textcolor{red}{88.40} &\textcolor{red}{ 44.71}     &  \textcolor{red}{59.21}  &  \textcolor{red}{45.65}        & 58.74\\  \bottomrule
\end{tabular}
}
\end{table}

\subsection{Experimental Results on CIFAR Datasets}
\vspace{-0.1in}

\textbf{Setups} Following the experimental setting in the literature, the initial learning rate is 0.1 and decays by a factor of $100$ at the 160-th, 180-th epoch for both ABSGD and SGD in our experiments, respectively. The value of $\lambda$ in ABSGD tuned in $[1:1:10]$. 

\noindent\textbf{Results.} We report the  results with label independent losses in Table~\ref{tab:apd-cifar-online} and with label dependent losses  in Table~\ref{tab:mbd-CIFAR-ONLINE}. We can see that ABSGD consistently outperforms SGD with a noticeable margin regardless of imbalance strategies and imbalance ratio $\rho$. In particular, we have more than $2\%$ improvements on the CIFAR10-ST and CIFAR100-LT, respectively with $\rho = 100$. 
For the label dependent losses, 
 we have the following observations, comparing ABSGD with SGD, we can see that our method that incorporates the self-adaptive robust weighting scheme performs consistently better in all imbalanced settings. This verifies that the proposed self-adaptive weighting scheme is also effective even when applied on top of the class-level weighting strategy. It is notable that META requires a separate validation data and is more computationally expensive than our method. Hence, our method is a strong choice even compared with the SOTA meta learning method, especially for highly imbalanced tasks. Also, the improvements of ABSGD with CB losses over ABSGD with label independent losses verify the importance of prior label information in addressing the data imbalance problem.
 \subsection{Experimental Results on ImageNet-LT, Places-LT and iNaturalist2018}
 \textbf{Setups and baselines.} Next, we conduct experiments on large-scale datasets and compare ABSGD with more baselines. We conduct experiment on two different architectures, ResNet50 for ImageNet-LT and iNaturalist2018, and ResNet152 for Places-LT and iNaturalist2018. We compare ABSGD with several methods, which include single-stage methods such as momentum SGD for optimizing LDAM loss, CB-CE loss and CB-Focal loss, two-stage methods such as $\tau$-normalized (CB-CE), LWS (CB-CE) proposed in~\citep{kang2019decoupling}, and meta-learning method (META)~\citep{jamal2020rethinking}. 
 For the two-stage decoupling strategy baseline methods~\citep{kang2019decoupling}, the first stage uses the standard uniform sampling to train the model with the CE loss, and the second stage fine tunes part of parameters in higher layers such as the fully connected (FC) layers and last block of (LB) feature layers.  META also uses the two-stage strategy to improve the performance.

Here, to achieve the SOTA results, we investigate two-stage decoupling strategy for ABSGD. Hence, the two-stage decay $\lambda$ training scheme can be automatically applied.
For ImageNet-LT, we jointly train the feature representation and classifier using momentum SGD in the first stage for 90 epochs from scratch, and finetune the FC layer for 90 epochs of ABSGD in the second stage. For Places-LT, we train the Last Block (LB) of the convolutions layer and Fully Connected (FC) layer for 90 epochs in the first stage using SGD with momentum from an ImageNet pretrained model, and finetune the FC and LB layer for 30 epochs in second stage using ABSGD. For iNaturalist2018, we run momentum SGD ($\beta = 0.9$) for 200 epochs in the first stage from the ImageNet pretrained model, and in the second stage we only finetune FC layer and LB of the neural network using ABSGD with $\lambda=10$ for 30 epochs. $\lambda$ is tuned in $\{10, 20, 30 \}$ for all datasets. The initial learning rates and learning scheme are described in Table~\ref{tab:lr_setting} (Appendix). All of our results are reported based on 3 independent runs.

\begin{table*}[t]
   \caption{
  Test top-1 accuracy(\%) of different baseline methods on ImageNet-LT with Resnet50.}
\centering
\label{tbl:SOTA-imageNet-LT}
\begin{tabular}{l|c|c|c|c|c}
\hline
Methods       & Sampling & Loss & Stage-1 TV & Stage-2 TV & Results \\ \hline
Vanilla Model~\citep{jamal2020rethinking}   & None     & CE   & All        & -          & 41.0    \\
CB-CE ~\citep{cui2019class}           & None     & CE   & All        & -          & 41.8    \\
Joint~\citep{kang2019decoupling}          & CB       & CE   & All        & All        & 41.6    \\
NCM~\citep{kang2019decoupling}            & CB       & CE   & All        & FC         & 44.3    \\ 
cRT~\citep{kang2019decoupling}            & CB       & CE   & All        & FC         & 43.3    \\ 
$\tau$-normalizer~\citep{kang2019decoupling}  & CB       & CE   & All        & FC         & 46.7    \\ 
META$^\dagger$~\citep{jamal2020rethinking}      & None     & CE   & All        & FC         & \underline{48.0}      \\ \hline
ABSGD & None & CE & All & FC+LB & \textbf{48.2}
\\
\hline
\end{tabular}
\end{table*}

\begin{table}[htbp]
    \caption{Test top-1 accuracy(\%) of different baseline methods on Places-LT using ResNet50.}
\centering
\label{tbl:SOTA-Places-LT}
\begin{tabular}{l|c|c|c|c|c} \hline
Methods       & Sampling & Loss   & Stage-1 TV & Stage-2 TV & Results   \\ \hline
Vanilla Model~\citep{jamal2020rethinking} & -       & CE     & FC/LB+FC   & -          & 27.9/30.3 \\
Vanilla Model~\citep{zhang2017range} & -     & Range  & FC         &     -       & 35.1      \\
Joint~\citep{kang2019decoupling}       & CB       & CE     & LB+FC      & LB+FC      & 30.2      \\
NCM~\citep{kang2019decoupling}            & CB       & CE     & LB+FC      & FC         & 36.3      \\
cRT~\citep{kang2019decoupling}            & CB       & CE     & LB+FC      & FC         & 36.7      \\
$\tau$-normalized~\citep{kang2019decoupling}   & None       & CE     & LB+FC      & FC         &\underline{37.9}      \\
OLTR$^*$ ~\citep{liu2019large}        & CB       & CE     & LB+FC      & FC   & 35.9      \\
META $^\dagger$~\citep{jamal2020rethinking}      & None     & CE     & LB+FC      & FC       & 37.1      \\ \hline
ABSGD  & None & CE &LB+FC & FC& \textbf{38.7} \\ 
\hline
\end{tabular}
\end{table}

\noindent\textbf{Results} Table~\ref{tbl:SOTA-imageNet-LT},~\ref{tbl:SOTA-Places-LT} ~\ref{tab:apd-iNaturalist18} are the experimental results of ImageNet-LT, Places-LT and iNaturalist2018, respectively. To better understanding results, we make some notes in the table. {\it TV} represents Trainable Variable. {\it All} represents standard training process that optimizes all the parameters of the backbone. {\it FC} represents fully connected layer, LB represents the last block of feature layers in the backbone. The CB in the Sampling column denotes Class-Balanced Sampling~\citep{cui2019class} in the second stage. $^\dagger $ represents an additional balanced data set is required in the second stage. $^*$ denotes an additional memory is required in the second stage. The bold numbers and the numbers with underline in the table represent the best and the second best the numbers with underline on each dataset, respectively.

We can see that ABSGD combining with the two stage training strategy achieves best on all three datasets. For the ImageNet-LT dataset in Table~\ref{tbl:SOTA-imageNet-LT}, ABSGD has 0.2\% improvements over the next best META method while has no requirements on the additional balanced validation datasets. For Places-LT, ABSGD has 0.9\% improvements over than the best baseline, $\tau$-normalized.  For the iNaturalist2018-LT in Table~\ref{tab:apd-iNaturalist18}, ABSGD  outperforms all other baselines by a large margin $0.3\%$ and $0.6\%$ for using both ResNet50 and ResNet152, respectively. To the best of our knowledge, 73.1\% is the SOTA result on iNaturalist2018 dataset. In addition, it is worth to mention that all the other baselines takes the advantage of the Class-Balanced Sampling or additional balanced validation datasets (META), which makes ABSGD more favorable than the baselines.

\begin{figure}[htbp]
\centering
\includegraphics[width=0.267\linewidth]{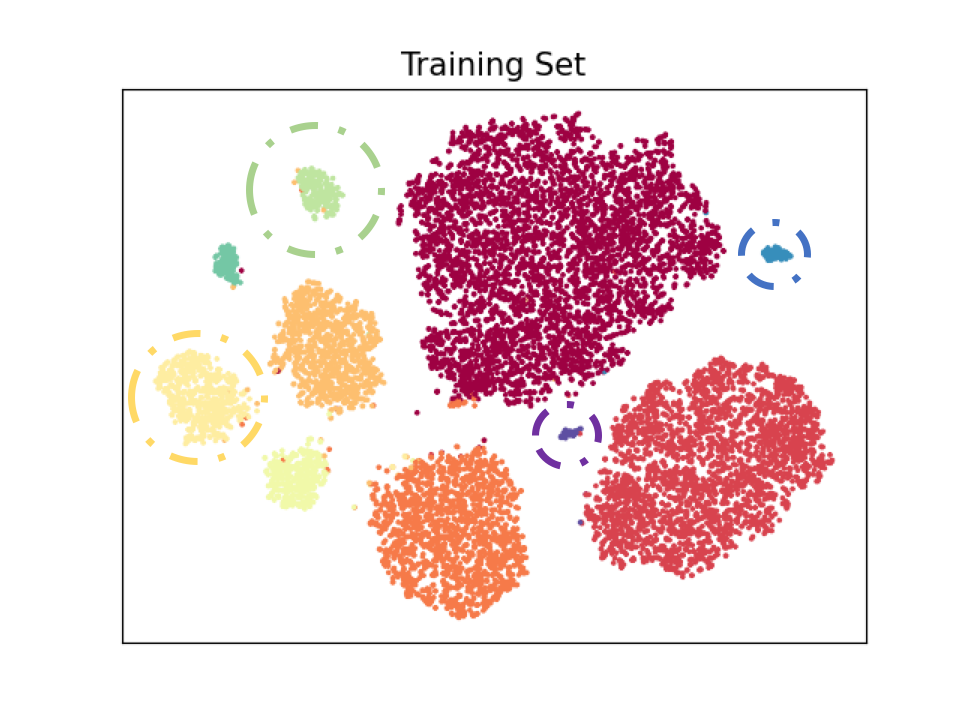}\hspace{-0.2in}
\includegraphics[width=0.267\linewidth]{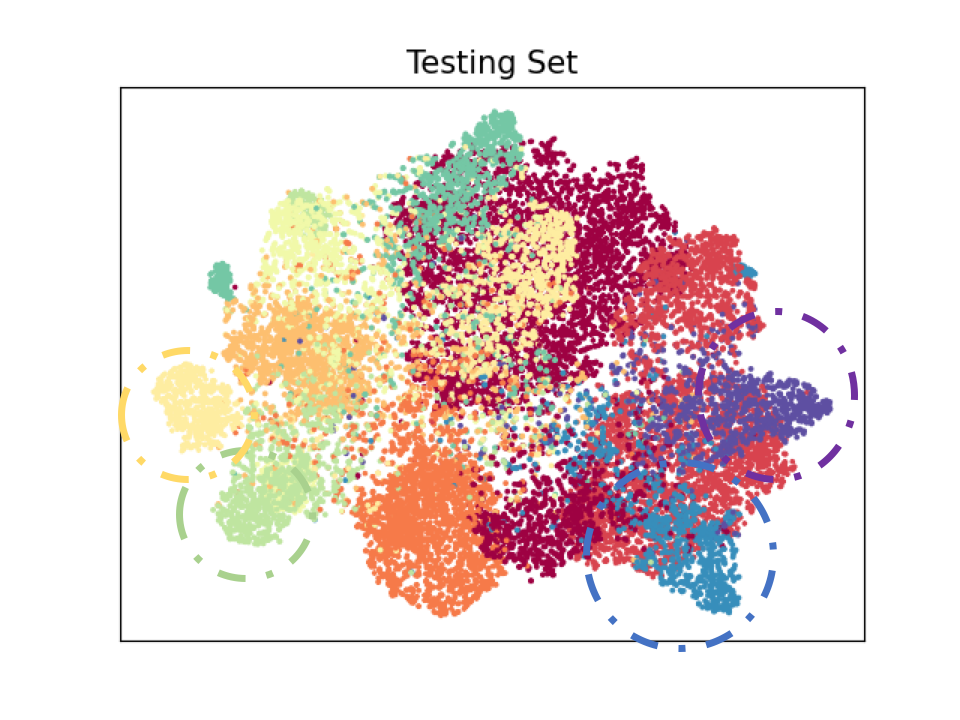}\hspace{-0.2in}
\includegraphics[width=0.267\linewidth]{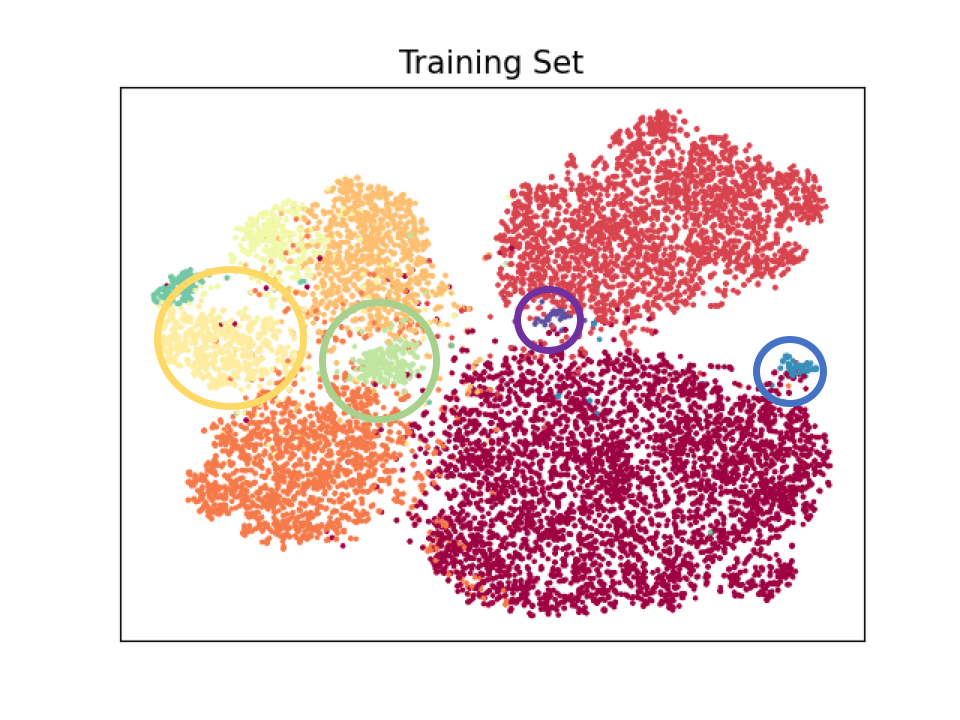}\hspace{-0.2in}
\includegraphics[width=0.267\linewidth]{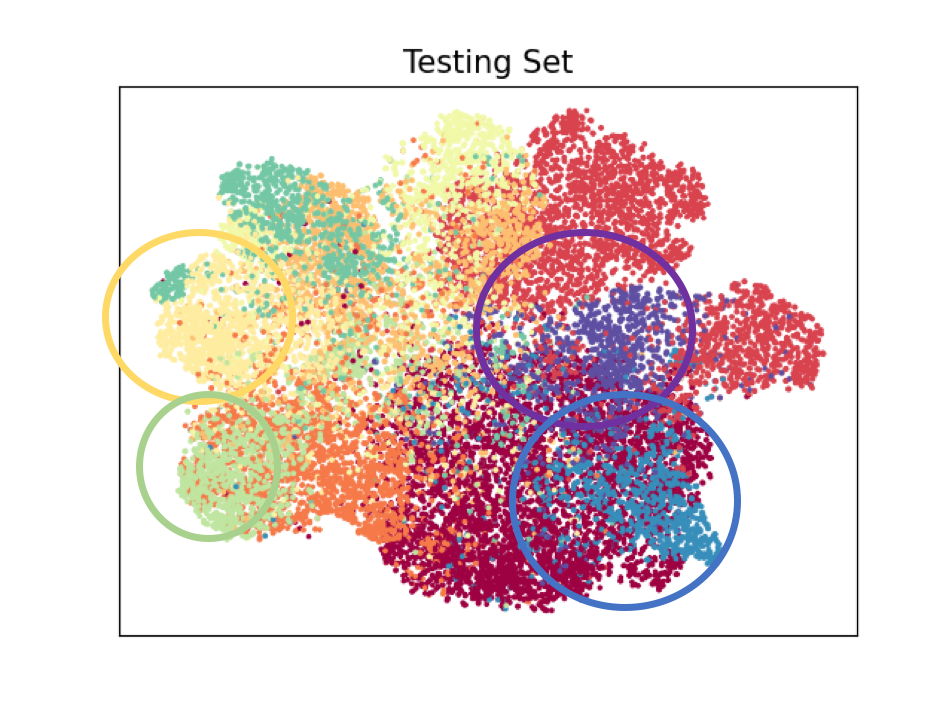}
\vspace{-0.1in}
  \caption{\small t-SNE visualization of feature representations of training \& testing set on CIFAR10-LT ($\rho=100$) with different $\lambda$ strategies.  Left two figures: Two-stage decay of $\lambda$: first phase $\lambda = 100$ and second phase $\lambda = 1$. Right two figures: Fixed $\lambda =1$.}  
\label{Fig:TSNE}
\end{figure}

\begin{table}[htbp]
    \caption{Top-1 testing accuracy(\%) of different methods on iNaturalist2018 using ResNet50, ResNet152. }
\label{tab:apd-iNaturalist18}
\centering
\begin{tabular}{l||c|c|c}
\hline
Methods       & \multirow{2}{*}{Stage-2 TV} & Results & Results \\ \cline{3-4}
Network       & & ResNet50 & ResNet152 \\ \hline
CE~\citep{cui2019class}                 & -         & 65.8  & 69.0  \\
LDAM~\citep{cao2019learning}           & -          & 68.0     & - \\
CB-Focal~\citep{cui2019class}               & -          & 61.1  & -   \\
NCM (CE)~\citep{kang2019decoupling}             & FC         & 63.1  & 67.3 \\
cRT (CB-CE)~\citep{kang2019decoupling}               & FC         & 68.2   &71.2  \\
$\tau$-Normalized (CE)~\citep{kang2019decoupling}         & FC         & 69.3  &\underline{72.5}  \\
LWS (CB-CE)~\citep{kang2019decoupling}  & FC &\underline{69.5}  &72.1\\ \hline
META$^\dagger$ (CB-CE)~\citep{jamal2020rethinking}    & All        & 67.6& -    \\
META$^\dagger$ (CB-Focal) ~\citep{jamal2020rethinking}       & All        & 67.7 &-  \\\hline
ABSGD (CB-CE) &FC& \textbf{69.8} &\textbf{73.1}\\ \hline
\end{tabular}
\end{table}

\subsection{Ablation Studies on CIFAR Datasets }
\label{sec:aba_study}
\noindent
In the ablation study, we first study ABSGD from different perspectives: a) the stagewise decay $\lambda$; 
b) the influence of the moving average parameter $\gamma$  on the testing accuracy. Then we plot the average instance robust weights for each class to show the attention of ABSGD towards the minority class.

\vspace{-0.1in}
\paragraph{Two-stage decay of $\lambda$.} To verify the model enjoys the benefits of stagewise decay $\lambda$ the same as the learning rate $\eta$, we compare the feature representations in both training and testing data between adopting the two-stage  $\lambda$ decay training strategy and using a fixed value of $\lambda$ during the training. 
For two-stage strategy, we use $\lambda=100$ in the first phase and decay it to $1$ in the second phase. For fixed values of $\lambda$, we use $\lambda=1$. The results are plotted in the second column of Figure~\ref{Fig:TSNE} on CIFAR10-LT. It is clear to see using the stagewise strategy on $\lambda$ yields much better feature representations that are well separated between different classes. In contrast, the learned feature representations with a fixed value $\lambda=1$ are more cluttered. Thus the stagewise decay $\lambda$ strategy is better than using a fixed value of $\lambda$, which verifies our algorithmic choice. We also provide the convergence curves of different $\lambda$ strategies in Appendix.

\vspace{-0.1in}
\paragraph{The sensitivity of the moving average parameter $\gamma$}
In the derivation of Theorem~\ref{thm:main}, $\gamma = O(\frac{1}{\sqrt{T}})$, which  decreases to 0 when the total number of iterations increases. In practical training, we tune the $\gamma\in \{ 0.1, 0.3, 0.5, 0.7, 0.9\}$. 
We report the testing accuracy over 3 independent runs in Figure~\ref{Fig:lambda} (left two) and compare it with standard SGD training, the green dashed line. Here we can see that ABSGD achieves highest testing accuracy with $\gamma = 0.5$ on both CIFAR10-LT and CIFAR100-LT. All the results of ABSGD with different $\gamma$ are better or comparable than momentum SGD verifies the effectiveness of the moving average estimator Step 4 in Algorithm~\ref{alg:ABSGD}.

\vspace{-0.1in}
\paragraph{The average instance weights per-class} ABSGD is an instance-level weighting method. For each sample, ABSGD assigns a robust weight that is proportional to the scaled loss value. For ABSGD (CE), we plot the average robust weights for the samples in the minority and majority class in Figure~\ref{Fig:lambda} (right two). It can be clearly seen that the average weights of samples in minority class is greater than the average  weights of samples in majority class, which verifies the intuition behind ABSGD. 

\begin{figure}
  \centering
   \includegraphics[width=0.24\linewidth]{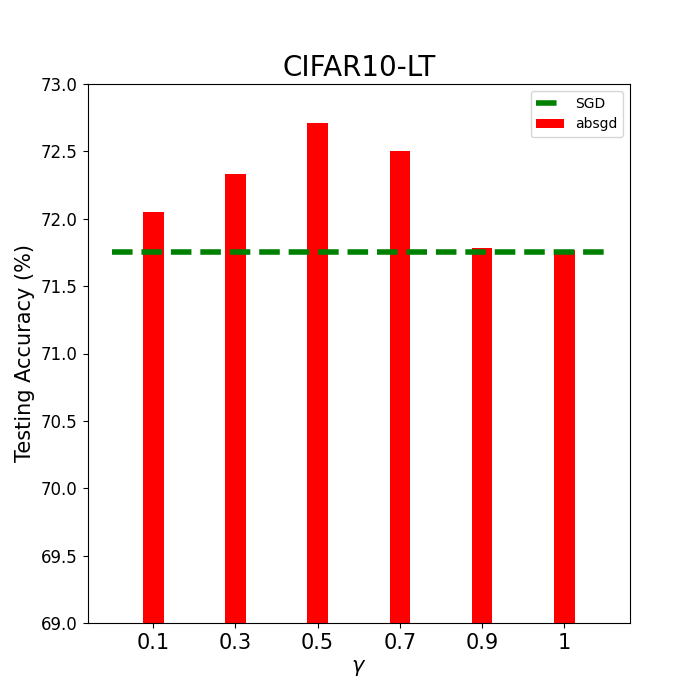}
   \includegraphics[width=0.24\linewidth]{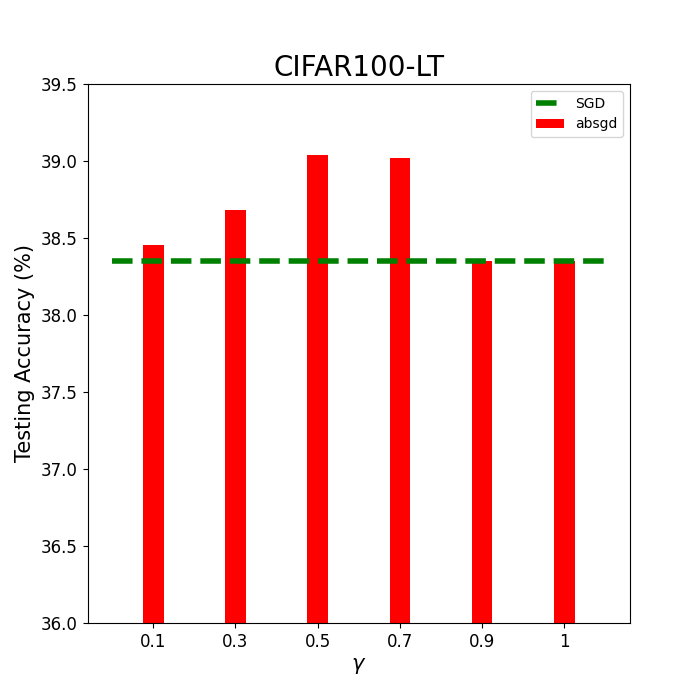}
   \hspace{-0.01in}
    \label{Fig:gamma-robust}
     \includegraphics[width=0.24\linewidth]{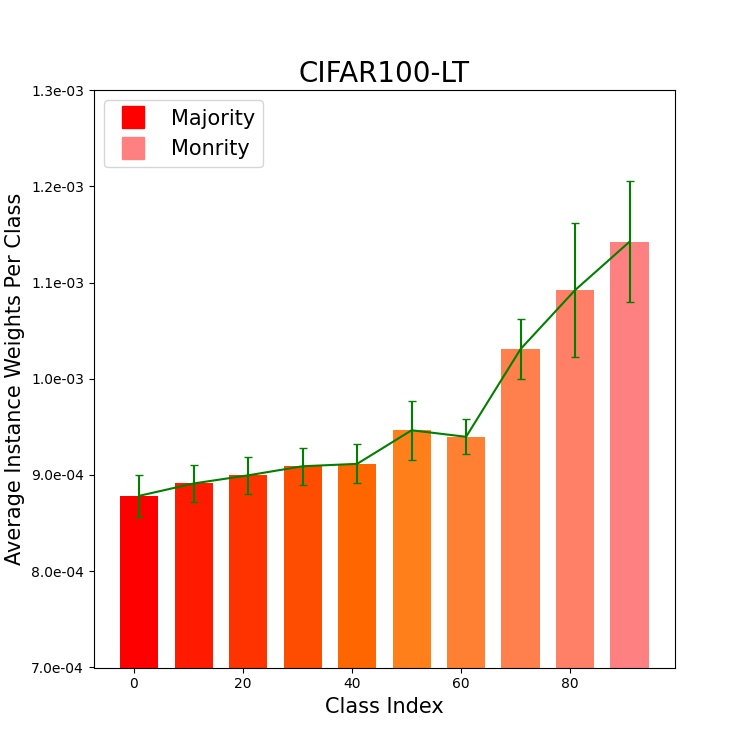}
   \includegraphics[width=0.24\linewidth]{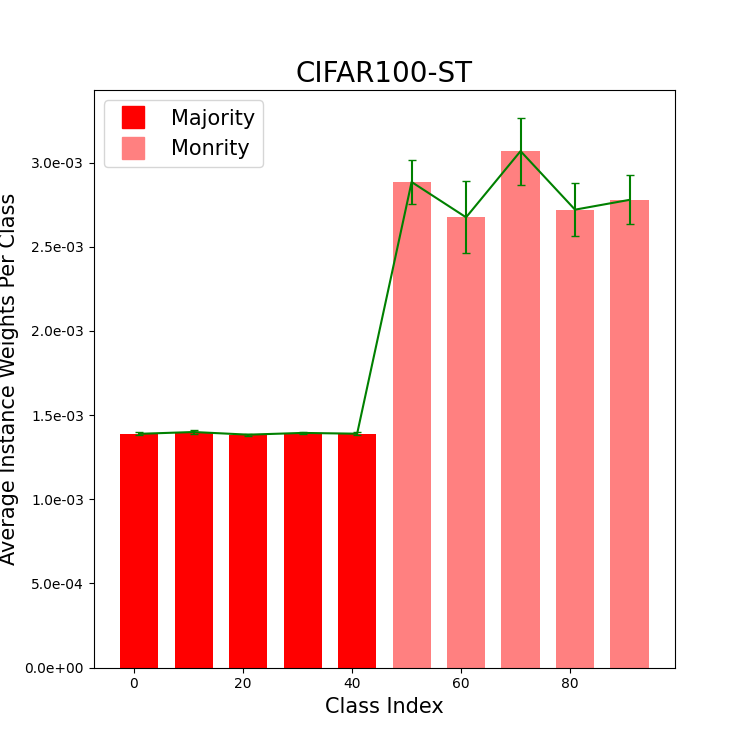}
   \vspace{-0.2in}
     \caption{\small Left two: the influence of $\gamma$ on the CIFAR10-LT and CIFAR100-LT with imbalance ratio $100$. The results are reported over 3 independent runs. The green error bar is the stand deviation of each results. Right two: the average instance weights for difference classes during the training process on CIFAR100-LT  and CIFAR100-ST  with imbalanced ratio $100$ on ResNet32. }
    \label{Fig:lambda}
\end{figure}

\section{Experimental Results on Label Noise Problem}
To show the effectiveness of ABSGD for handling noisy labels, 
we provide empirical studies on the noisy label datasets in this section.  We conduct experiments on CIFAR10, CIFAR100, and Clothing1M~\citep{xiao2015learning} datasets. The noise rate is defined as the portion of samples whose ground truth label are randomly flipped.
We follow the same setting as~\citep{wang2019symmetric} and consider both the symmetric label noise and asymmetric label noise on CIFAR10 and CIFAR100 with the noisy rates $\{0.2, 0.4\}$~\citep{wang2019symmetric, patrini2017making, zhang2018generalized} in our experiments.
The symmetric (uniform) noisy labels are generated by flipping the labels of a given proportion of training samples to one of the other class labels uniformly.  The asymmetric noisy labels are class-dependent noise, in which the flipping of labels only occur within a specific set of classes. Please refer to the Noise setting section in~\citep{wang2019symmetric} for details. The Clothing1M is a real-world large-scale label noisy dataset and includes 14 classes with 1M training images in total.

\noindent\textbf{Baselines} We compare ABSGD with SGD and a mini-batch based method for solving the min-min DRO formulation~(\ref{eqn:minpd-KL}) named EG~\citep{DBLP:journals/corr/abs-2104-01493} and CIW with $\alpha=1$~\citep{kumar2021constrained} with different losses. The first is the standard CE loss. Then a theoretically grounded generalized cross entropy loss, named as TCE, has been proposed later on~\citep{zhang2018generalized}. Furthermore, \citep{wang2019symmetric} proposed a symmetric loss, named SCE, to address the under learning and overfitting problem that widely exists in the noisy labels. For crafting loss hyperparameters, we tune the symmetric parameters in SCE $\alpha,\beta \in \{0.1, 1, 0.5, 1,5\}$ and the truncated parameter  $q$ in TCE is tuned in $\{0.1, 0.5, 0.7\}$. 
 The momentum parameter $\gamma$ for ABSGD is tuned in $\{0.1, 0.5, 0.9 \}$.
 \subsection{Experimental Results on CIFAR Datasets}
 \noindent\textbf{Experimental Setting.} Following the setting in~\citep{wang2019symmetric}, we use a 4-layer CNN proposed in~\citep{wang2019symmetric} for CIFAR10 data. For the CIFAR100, we use ResNet18 for the symmetric noisy labels and the asymmetric noisy labels.
We report the results of using CE and TCE losses optimized by SGD, and SCE optimized by SGD, EG, CIW and ABSGD, respectively.
 The weight decay for different methods are tuned in $\{ 1e$-$4, 5e$-$4, 1e$-$3, 5e$-$3\}$. 
 We train the model for 120 epochs and the batch size is fixed as 128 for all settings. The initial learning rates are tuned in $\{1e$-$3, 1e$-$2, 1e$-$1\}$ and decayed at the epoch of 40, and 80 epochs by a factor of 10. The ABSGD hyper-parameter $\lambda$ is tuned in $\{-0.1, -0.5, -1, -2, -3 \}$. 

 \noindent\textbf{Results.}  The results are reported in Table~\ref{tab:noisy-apd-cifar-online}. 
 Among the three baselines, SCE achieves better/comparable experimental results in most of the different models, settings and datasets. Then the testing accuracy is consistently improved further by optimizing SCE with the proposed ABSGD. By comparing the results across different noisy rates, we can see that our ABSGD(SCE) improves more when the noisy rate increases.


 \subsection{Experimental Results on Clothing1M}


\begin{table}[htbp]
\caption{Top-1 testing accuracy (\%) on noisy labelled CIFAR10 and CIFAR100 data of different methods. Results are reported over 5 independent runs. Bold and underline represent the best and second results}
\label{tab:noisy-apd-cifar-online}
\centering
\begin{tabular}{l|l|cc|ccc}
\toprule
                          &        & \multicolumn{2}{c}{Symmetric}  & \multicolumn{2}{c}{Asymmetric} \\ 
                          \hline
                          &    Noisy Rate    & 0.2            & 0.4     & 0.2         & 0.4      \\      \hline
\multirow{3}{*}{CIFAR10}  & SGD(CE)     &     88.59 ($\pm$ 0.21)& 85.75 ($\pm$ 0.31)  &          86.62 ($\pm$ 0.27)           &  80.81 ($\pm$ 0.29)          \\
                         & SGD(TCE)& 89.87 ($\pm$ 0.27)& 86.84 ($\pm$ 0.32)& 88.97 ($\pm$ 0.31) & 80.85 ($\pm$ 0.27)\\
                          & SGD(SCE)    &    90.05 ($\pm$ 0.23)  & 87.83 ($\pm$ 0.33)&   90.25 ($\pm$ 0.34)  & 81.91 ($\pm$ 0.42) \\
                           & EG(SCE)    &    90.25 ($\pm$ 0.21)  & 88.13 ($\pm$ 0.29)&   \underline{90.55} ($\pm$ 0.32)  &84.47($\pm$ 0.25) \\
                            & CIW(CE) &       90.29 ($\pm$ 0.23) &  87.92 ($\pm$ 0.19) &    88.99 ($\pm$ 0.24)  &    \underline{87.18} ($\pm$ 0.21)  \\
                             & CIW(SCE) &       \underline{90.79} ($\pm$ 0.25) &  88.21 ($\pm$ 0.22)     &    89.97 ($\pm$ 0.23)   &     85.13 ($\pm$ 0.19)  \\\hline
                           & ABSGD(CE) &       90.64 ($\pm$ 0.20) &  \underline{88.31}  ($\pm$ 0.19)    &    90.15 ($\pm$ 0.22)  &    \textbf{87.84} ($\pm$ 0.25)  \\
                          & ABSGD(SCE) &        \textbf{91.15} ($\pm$ 0.18) &  \textbf{88.65} ($\pm$ 0.21)      &    \textbf{91.04} ($\pm$ 0.21)    &      86.10 ($\pm$ 0.21)       \\ 
                          \bottomrule
\multirow{3}{*}{CIFAR100} & SGD(CE)     &    68.21  ($\pm$ 0.27)      &     62.54($\pm$ 0.22)   & 69.57 ($\pm$ 0.32)      &   62.93 ($\pm$ 0.28) 
\\
& SGD(SCE)    &     68.28 ($\pm$ 0.29)      &        60.72 ($\pm$ 0.23)     & 69.31 ($\pm$ 0.31)         &    64.22 ($\pm$ 0.21)  \\
  & SGD(TCE) &      65.12 ($\pm$  0.39)   &          59.61  ($\pm$ 0.32)   &    67.98($\pm$ 0.28)&     60.88 ($\pm$ 0.27)\\
                           & EG(SCE) &      69.53 ($\pm$ 0.21)     &      65.36 ($\pm$ 0.19)     &    69.61 ($\pm$ 0.23)   &    
                        67.01 ($\pm$ 0.24)\\
                           & CIW(CE) &      70.21 ($\pm$ 0.20)     &      65.89 ($\pm$ 0.19)     &  69.29 ($\pm$ 0.22)&    
                        64.75 ($\pm$ 0.23) \\
           & CIW(SCE) &      69.53 ($\pm$ 0.19)     &      65.38 ($\pm$ 0.23)     &  \underline{70.07} ($\pm$ 0.21)&    
                        \underline{67.19} ($\pm$ 0.24) \\
                           \hline
                             & ABSGD(CE) &     \underline{70.63} ($\pm$ 0.19)     &      \underline{66.23} ($\pm$ 0.24)     &  \textbf{70.70} ($\pm$ 0.21)    &       
                         \textbf{68.16} ($\pm$ 0.22)\\
                          & ABSGD(SCE) &      \textbf{71.23} ($\pm$ 0.19)     &      \textbf{66.39} ($\pm$ 0.20)     &    69.98 ($\pm$ 0.23)   &       
                         65.26 ($\pm$ 0.24)
                \\ \bottomrule
\end{tabular}%
\end{table}

\noindent\textbf{Experimental Setting.}
We train the ResNet50 starting from the ImageNet pretrained model for all the baselines following the same setting as~\citep{wang2019symmetric} on the Clothing1M dataset. The training phase includes 10 epochs, and the initial learning rate is fixed as $0.002$ and decayed by a factor of 10 at the 5th epoch for all the methods. The weight decay is set as $1e$-$2$.  The $\lambda$ for ABSGD is tuned in $\{-1, -5, -10, -15\}$. We report the results on CE, TCE, SCE  optimized by standard SGD, EG and ABSGD, respectively.
\begin{table}[h]
\centering
\caption{Top-1 testing accuracy (\%) on Clothing1M data of different methods. Results are reported over 3 independent runs.}
\label{tab:clothing1M}
\begin{tabular}{c|c|c|c|c}
\toprule
    loss& SGD                 & EG &CIW &  ABSGD
    \\ \hline 
CE  & 69.05 ($\pm$ 0.21)  & {69.42} ($\pm$ 0.20) &{69.53} ($\pm$ 0.31)&\textbf{69.79} ($\pm$ 0.18) \\ \hline 
SCE & 69.31 ($\pm$ 0.31)  & {69.32} ($\pm$ 0.21)& {69.22} ($\pm$ 0.29)& \textbf{69.93} ($\pm$ 0.11) \\ \hline 
TCE & 68.28 ($\pm$ 0.23) &{68.66} ($\pm$ 0.19) & 68.51 ($\pm$ 0.33)& \textbf{68.69} ($\pm$ 0.18)\\ 
\bottomrule
\end{tabular}
\end{table}

\noindent\textbf{Results.}
The results are reported in Table~\ref{tab:clothing1M}. We can see that ABSGD has better testing accuracy than SGD and EG for all losses.  

\section{Conclusion}
In this paper, we propose a unified framework, ABSGD, for addressing the data imbalance and noisy label problem. We provide the theoretical analysis both for the SGD-style and Adam-style updates. Empirical studies on multiple benchmark datasets with different models show the outstanding performance of ABSGD compared with several strong baselines.

\subsubsection*{Acknowledgments}
Q. Qi and T. Yang are partially supported by NSF Career Award \#1844403, NSF Grant \#2110545, and NSF-Amazon Joint Grant \#2147253. Part of this work was supported by Alibaba Gift funding.   T. Yang supervised the work including formulations, analysis and experiments, and helped write the paper. Q. Qi is responsible for conducting experiments, writing proofs and papers. Y. Xu, W. Yin and R. Jin provided suggestions in the early stage of this work. The work of Y. Xu and R. Jin were done when they were at Alibaba Group at Seattle.

\bibliographystyle{tmlr}
\bibliography{all}

\newpage
\section{Appendix}
\begin{table}[htbp]
\centering
\caption{General hyperparameter settings in different experiments of section~\ref{sec:experiments}}
\label{tab:lr_setting}
\resizebox{0.95\linewidth}{!}{
\begin{tabular}{r|ccccc}\hline
Datasets        & Initial Step Size & Weight Decay & Schedule         &Batch Size & Momentum \\ \hline
CIFAR10-ST/LT   & 0.1               & 2e-4         & Stagewise Decay~\citep{yuan2019stagewise} & 128 & 0.9      \\
CIFAR100-ST/LT  & 0.1               & 2e-4         & Stagewise Decay~\citep{yuan2019stagewise} & 128 & 0.9      \\
ImageNet-LT     & 0.05               & 5e-4         & Cosine Annealing~\citep{loshchilov2016sgdr} & 512 & 0.9      \\
Places-LT       & 0.05              & 5e-4         & Cosine Annealing~\citep{loshchilov2016sgdr} &512 & 0.9      \\
iNaturalist2018 & 0.2               & 1e-4         & Cosine Annealing~\citep{loshchilov2016sgdr}&512 & 0.9      \\ \hline
\end{tabular}}
\end{table}

\paragraph{The benefits of momentum}
Next, we verify that adding the momentum term can dramatically improve performance. The results are plotted in the left two columns of Figure~(\ref{fig:as}) on CIFAR10-LT ($\rho=100)$ and CIFAR100-LT ($\rho=100$) datasets, where we plot the testing accuracy vs the epochs of optimization with an average of 3 runs. The results clearly show that including a momentum term helps improve performance and stabilize the training process.

\paragraph{The Effect of Damping $\lambda$ on Convergence.} 
 Figure~\ref{Fig:TSNE} shows the advantage of using the damping strategy on $\lambda$ on feature representation learning.  Here, we plot the convergence curves in terms of testing accuracy in Figure~\ref{fig:as}. It is obvious to see that damping $\lambda$ achieves higher test accuracy over fixed values of $\lambda$, which also verifies our choice of damping $\lambda$.

 \textbf{Running Time Comparison} To show the efficiency of ABSGD, we conduct an experiment on CIFAR-10 data with different networks on NVIDIA GeForce GTX 1080 Ti. The running time per iteration (seconds) of SGD, ABSGD and META methods are shown in the following table.  It is clear to see that ABSGD has a comparable running time as SGD, while the per iteration running time of META is way slower than SGD and ABSGD. 
\begin{table}[h]
\centering
\caption{Tunning time (seconds) per iterations of SGD, ABSGD, and META~\citep{jamal2020rethinking} methods on CIFAR-10 dataset with different networks.}
\label{tab:running-time}
\begin{tabular}{c|c|c|c}
\hline
\textbf{Network(\# Param.)} & \textbf{SGD} & \textbf{ABSGD} & \textbf{META} \\
\hline
ResNet32 (0.46M)            & 0.0167       & 0.0176         & 0.376         \\
ResNet44 (0.44M)            & 0.0234       & 0.0250         & 0.474         \\
ResNet56 (0.85M)            & 0.0284       & 0.0296         & 0.566         \\
ResNet110 (1.7M)            & 0.0684       & 0.0692         & 0.882       \\ \hline
\end{tabular}%
\end{table}

\textbf{Experiements on Convmixer} We have implemented the newly proposed structure, named as Convmixer~\citep{trockman2022patches}, which operates convolutional layers on small patches. Convmixer has been shown to achieve competitive results as ViT models but with faster training speed and fewer parameters. We conducted an experiment by comparing  ABSGD with CE loss and SGD for optimizing CE loss with on CIFAR10 dataset in the long tail setting with an imbalance ratio of 10, and 100. The result is presented in Table~\ref{tab:convexmix-results}.
 
\begin{table}[h!]
\centering
\caption{Empirical Results on imbalanced dataset CIFAR10-LT with Convmixer structure}
\label{tab:convexmix-results}
\begin{tabular}{c|c|c}
\hline
\textbf{Imbalance Ratio} & \textbf{SGD (CE)} & \textbf{ABSGD (CE)} \\ \hline
10                       & 82.1 ($\pm$ 0.28)        & 83.90       ($\pm$ 0.23)       \\
100                      & 63.2      ($\pm$ 0.31)     & 66.31     ($\pm$ 0.21)       \\ \hline
\end{tabular}%
\end{table}

\textbf{Comparison with optimal sample complexity algorithm with \text{RECOVER}}

In this section, we compare ABSGD with RECOVER on the CIFAR dataset, which achieves optimal sample complexity when solving regularized DRO with KL-divergence. The results are reported in Table~\ref{tab:compare_recover}. We can see that ABSGD achieve better empirical that RECOVER on imbalanced datasets.

\begin{table}[h!]
\centering
\caption{Experimental results on imbalanced CIFAT10-ST, CIFAR100-ST on ResNet32}
\label{tab:compare_recover}
\begin{tabular}{c|c|c|c}
\toprule
                          &   Imbalance Ratio         & ABSGD          & RECOVER        \\ \hline
\multirow{2}{*}{CIFAR10-ST}  & $10$  & 85.84 (± 0.27) & 82.61 (± 0.43) \\
                          & $100$ & 66.24 (± 0.35) & 63.53 (± 0.71) \\ 
                          \hline
\multirow{2}{*}{CIFAR100-ST} & $10$  & 55.15 (± 0.29) & 52.52 (± 0.61) \\
                          & $100$ & 39.76 (± 0.37) & 37.21 (± 0.64) \\
                          \bottomrule
\end{tabular}%
\end{table}


\begin{figure}
    \centering
    \includegraphics[width=0.27\linewidth]{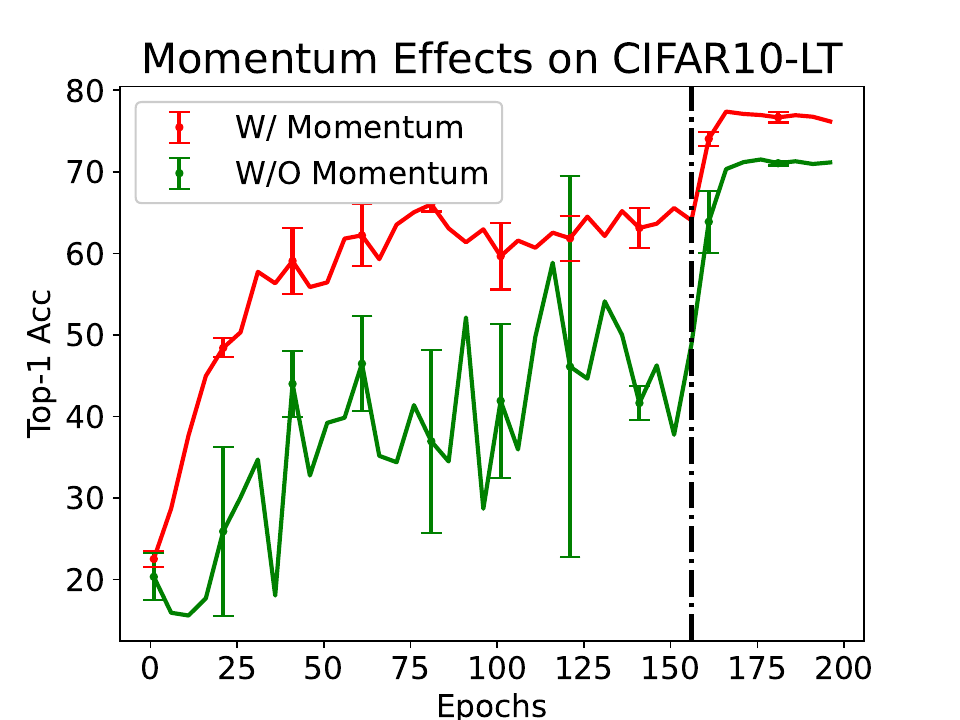}\hspace{-0.225in}  \includegraphics[width=0.27\linewidth]{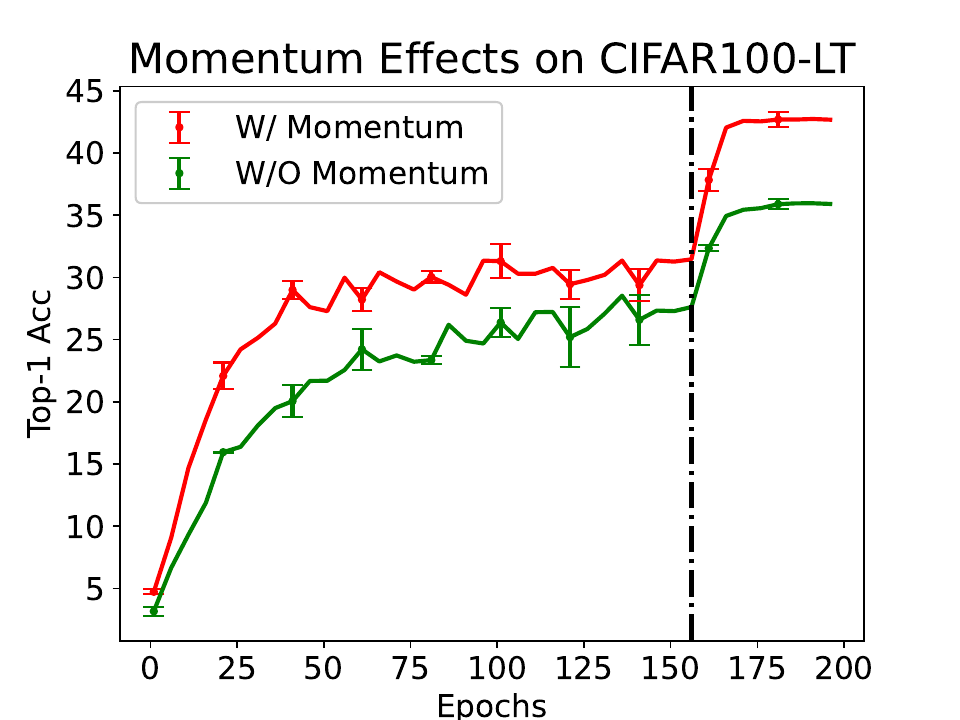}\hspace{-0.225in}
\includegraphics[width=0.27\linewidth]{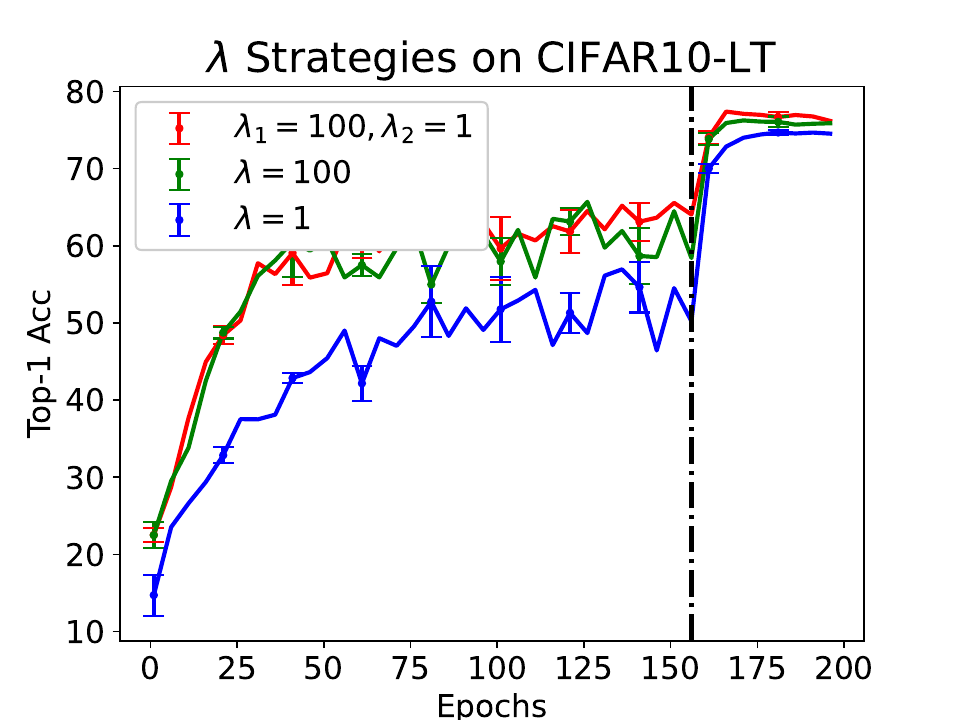}\hspace{-0.225in}
\includegraphics[width=0.27\linewidth]{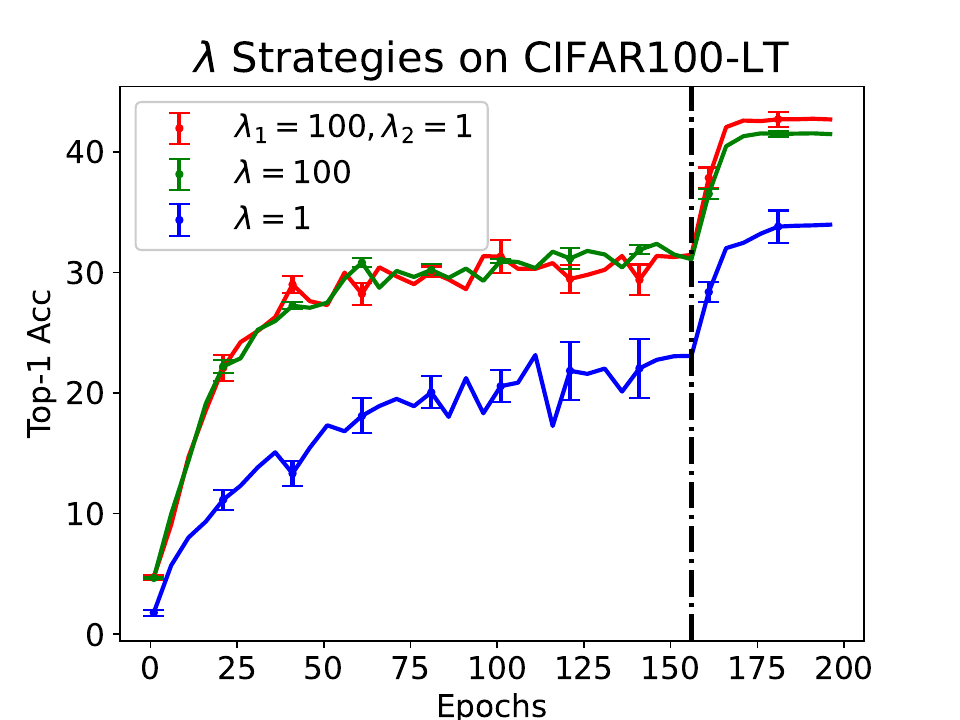}\hspace{-0.225in} 
  \caption{Ablation studies on CIFAR10-LT and CIFAR100-LT datasets: Left two images: comparing ABSGD with (W/) momentum vs without(W/O) momentum.  Right two images: comparing ABSGD with different $\lambda$ strategies on CIFAR-LT datasets.  The black dashed lines represent the epoch where the learning rates are decayed. For the red line,  $\lambda$ also decays from $\lambda_1$ to $\lambda_2$ at the dashed line epoch.
The results are averaged over 3 random trials. }\label{fig:as}
\end{figure}


\section{Theoretical Analysis of Theorem~\ref{thm:main}}

\noindent
\textbf{Notations}
Denote $f(s) = \lambda \log(s)$, $g(\w;\z) = \exp(\frac{L(\w;\z)}{\lambda})$ and $g(\w) = \E_{\z}[g(\w;\z)]$, then
\begin{align*}
    F_\lambda (\w) = f(g(\w)) = f(\E_{\z}[g(\w;\z)])
\end{align*}

And $L_g$, $L_f$, $C_g$, $C_f$, $D_s$, $D_\G$ are positive constants.
By denoting $L_g = \frac{C_0L_l}{\lambda} + \frac{C_0C_1}{\lambda^2}$, $C_g = \frac{C_0\sqrt{C_1}}{\lambda}$, $L_f = \lambda$, and $C_f = \lambda$,  we first derive the smooth and continuous property of $f(\cdot)$ and $g(\w;\z)$ for $\forall \z\sim\D$ implied by Assumption~\ref{ass:ass-1} with the following propositions.
\begin{prop}
\label{prop:1}
$g(\w)$ is a $L_g$-smooth and $C_g$-Lipschitz continuous function.
\end{prop}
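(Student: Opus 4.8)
The plan is to reduce the claim to a per-sample estimate on $g(\w;\z)$ and then lift it to $g(\w)=\E_{\z}[g(\w;\z)]$ by Jensen's inequality, since both differentiation and norm-bounding commute with the expectation. I would first record that, under the uniform bounds assumed in Theorem~\ref{thm:main}, differentiation under the integral sign is justified, so that $\nabla g(\w)=\E_{\z}[\nabla_\w g(\w;\z)]$; this is the only analytic subtlety in passing between the two scales, and it follows from the domination provided by $C_0$ and $\sqrt{C_1}$.

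First I would compute the per-sample gradient by the chain rule: from $g(\w;\z)=\exp(L(\w;\z)/\lambda)$ we get $\nabla_\w g(\w;\z)=\frac{1}{\lambda}\exp(L(\w;\z)/\lambda)\nabla L(\w;\z)$. The Lipschitz bound is then immediate, using $\exp(L(\w;\z)/\lambda)<C_0$ and $\|\nabla L(\w;\z)\|\le\sqrt{C_1}$: one has $\|\nabla_\w g(\w;\z)\|\le C_0\sqrt{C_1}/\lambda = C_g$, and Jensen's inequality gives $\|\nabla g(\w)\|=\|\E_{\z}[\nabla_\w g(\w;\z)]\|\le\E_{\z}\|\nabla_\w g(\w;\z)\|\le C_g$, which establishes $C_g$-Lipschitz continuity.

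For the smoothness bound, the key step is the standard add-and-subtract decomposition
$$\lambda\left(\nabla_\w g(\w;\z) - \nabla_\w g(\w';\z)\right) = \exp\left(\frac{L(\w;\z)}{\lambda}\right)\left(\nabla L(\w;\z) - \nabla L(\w';\z)\right) + \left(\exp\left(\frac{L(\w;\z)}{\lambda}\right) - \exp\left(\frac{L(\w';\z)}{\lambda}\right)\right)\nabla L(\w';\z).$$
The first term is bounded by $C_0 L\|\w-\w'\|$ using the $L$-smoothness of $L(\cdot;\z)$ together with the uniform bound $C_0$. For the second term I would observe that the scalar map $\w\mapsto\exp(L(\w;\z)/\lambda)$ has gradient of norm at most $C_0\sqrt{C_1}/\lambda$ (exactly the per-sample Lipschitz constant found above), hence is $C_0\sqrt{C_1}/\lambda$-Lipschitz, so this term is at most $\frac{C_0\sqrt{C_1}}{\lambda}\|\w-\w'\|\cdot\sqrt{C_1}=\frac{C_0 C_1}{\lambda}\|\w-\w'\|$. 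Dividing the whole identity by $\lambda$ and adding the two pieces yields $\|\nabla_\w g(\w;\z)-\nabla_\w g(\w';\z)\|\le\left(\frac{C_0 L}{\lambda}+\frac{C_0 C_1}{\lambda^2}\right)\|\w-\w'\|=L_g\|\w-\w'\|$, and Jensen's inequality again lifts this per-sample estimate to $\|\nabla g(\w)-\nabla g(\w')\|\le L_g\|\w-\w'\|$.

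The main obstacle will be the second term of the decomposition: every other estimate is a one-line consequence of the boundedness hypotheses, but bounding $|\exp(L(\w;\z)/\lambda)-\exp(L(\w';\z)/\lambda)|$ requires a Lipschitz estimate on the exponential-of-loss map, which is where the uniform loss bound $C_0$ and the gradient bound $\sqrt{C_1}$ must be combined — and this is precisely why both constants appear in $L_g$. I expect everything else to be routine.
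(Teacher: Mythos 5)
Your proof is correct and arrives at exactly the paper's constants $C_g = C_0\sqrt{C_1}/\lambda$ and $L_g = C_0 L/\lambda + C_0 C_1/\lambda^2$, but your argument for the smoothness half is genuinely different from the paper's. The paper proves smoothness by bounding the \emph{Hessian}: it expands $\nabla^2 g(\w;\z)$ via the product and chain rules and estimates $\|\nabla^2 g(\w)\| \le \E_{\z}\|\nabla^2 g(\w;\z)\| \le C_0 L/\lambda + C_0 C_1/\lambda^2$, which implicitly requires $L(\cdot;\z)$ to be twice differentiable so that the $L$-smoothness hypothesis can be read as $\|\nabla^2 L\| \le L$. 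Your add-and-subtract decomposition of $\nabla_\w g(\w;\z) - \nabla_\w g(\w';\z)$ stays at the level of first derivatives: the first term uses the Lipschitz property of $\nabla L(\cdot;\z)$ directly, and the second term uses the Lipschitzness of the scalar map $\w \mapsto \exp(L(\w;\z)/\lambda)$, which you correctly deduce from the per-sample gradient bound you had already established for the Lipschitz half. Your route therefore applies verbatim to losses that are $L$-smooth but only once differentiable, whereas the paper's does not without an extra (unstated) regularity assumption; the paper's Hessian computation is a bit shorter once twice-differentiability is granted. The Lipschitz-continuity half of your proof (chain rule, uniform bounds $C_0$ and $\sqrt{C_1}$, then Jensen's inequality to pass to the expectation) coincides with the paper's argument, and the lifting of the per-sample smoothness estimate to $g(\w) = \E_{\z}[g(\w;\z)]$ by Jensen is sound in both.
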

\begin{proof}
By Assumption~\ref{ass:ass-1} and Theorem~\ref{thm:main}, $\|\nabla L(\w;\z) - \nabla L(\w';\z)\| \leq L_l\|\w-\w'\|, \forall \w, \w'$, 
$g(\w;\z) = \exp(\frac{L(\w;\z)}{\lambda}) \leq C_0$, $L(\w;\z)\geq 0$ and $\| \nabla L(\w;\z)\|^2 \leq C_1$, we have $1\leq g(\w;\z) \leq C_0, \forall \z\sim\D$ and   
\begin{equation}
\begin{aligned}
& \|\nabla^2 g(\w) \| = \|\frac{1}{n}\sum\limits_{i=1}^n  \nabla^2 g(\w;\z_i)\|\leq \frac{1}{n}\sum\limits_{i=1}^n \| \nabla^2 g(\w;\z_i)\|\\
&=  \E_\z[ \|\nabla^2 g(\w;\z)\|] =\E_{\z}[ \frac{1}{\lambda}\|\nabla^2L(\w,\z)\exp(\frac{L(\w;\z)}{\lambda}) +  \exp(\frac{L(\w;\z)}{\lambda})\frac{\nabla L(\w;\z)}{\lambda} \nabla L(\w;\z)^{\top}\|]\\
& \leq \E_\z[ \frac{1}{\lambda}\|\nabla^2 L(\w;\z)\exp(\frac{L(\w;\z)}{\lambda}) \|+\frac{1}{\lambda^2}\|\exp(\frac{L(\w;\z)}{\lambda})\nabla L(\w;\z)\nabla L(\w;\z)^{\top} \|] \\
&\leq \frac{C_0L_l}{\lambda}+ \frac{C_0}{\lambda^2}\E_{\z}[\|\nabla L(\w;\z)\|^2] \leq \frac{C_0L_l}{\lambda} + \frac{C_0C_1}{\lambda^2}
\end{aligned}
\end{equation}
\noindent
In addition, with the assumption in Theorem~\ref{thm:main},

\begin{equation}
\begin{aligned}
\|\nabla g(\w)\| &= \|\E_\z[\nabla g(\w;\z)]\| \leq \E_{\z}[\|\nabla g(\w;\z)\|]  = \frac{1}{\lambda}\E_{\z}[\|\nabla L(\w;\z)\exp(\frac{L(\w;\z)}{\lambda})\|] \\
&\leq \frac{C_0}{\lambda} \E_{\z}[\|\nabla L(\w;\z)\|]\leq   \frac{C_0}{\lambda} \sqrt{\E_{\z}[\|\nabla L(\w;\z)\|^2]} \leq \frac{C_0\sqrt{C_1}}{\lambda}
\end{aligned}
\end{equation}
\end{proof}

\begin{prop}
\label{prop:2}
$f(s) = \lambda \log(s) $ is a  $L_f$-smooth and $C_f$-Lipschitz continuous function.
\end{prop}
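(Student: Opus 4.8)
The plan is to exploit the fact that, throughout the algorithm, the scalar argument fed into $f$ never drops below $1$, so that although $\lambda\log(s)$ has unbounded derivatives near the origin, on the range that actually arises its first two derivatives are uniformly controlled by $\lambda$. The first thing I would record is precisely this domain restriction: since $L(\w;\z)\geq 0$ for every $\z$, we have $g(\w;\z)=\exp(L(\w;\z)/\lambda)\geq 1$, and hence both $g(\w)=\E_{\z}[g(\w;\z)]$ and every moving-average iterate $s_{t+1}$ (a convex combination of minibatch averages of such quantities, as in Step~6 of Algorithm~\ref{alg:ABSGD}) satisfy $s\geq 1$. This lower bound is the crucial ingredient, and essentially the only nontrivial one; the asserted constants are genuinely false on $(0,1)$, so the proof must make the restriction $s\geq 1$ explicit.

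With the domain in hand, the rest is elementary calculus. I would differentiate once to get $f'(s)=\lambda/s$ and once more to get $f''(s)=-\lambda/s^2$. For the $C_f$-Lipschitz claim I would bound $|f'(s)|=\lambda/s\leq\lambda$ for all $s\geq 1$, and then invoke the mean value theorem to conclude $|f(s)-f(s')|\leq\lambda\,|s-s'|$, giving $C_f=\lambda$. For the $L_f$-smoothness claim I would bound $|f''(s)|=\lambda/s^2\leq\lambda$ for all $s\geq 1$, so that again by the mean value theorem $|f'(s)-f'(s')|\leq\lambda\,|s-s'|$, giving $L_f=\lambda$.

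The main (and only) obstacle is therefore conceptual rather than computational: one must justify that $f$ is only ever evaluated at arguments $s\geq 1$, which rests entirely on the nonnegativity of the loss $L(\w;\z)$ assumed implicitly in the proof of Proposition~\ref{prop:1}. I would state the two inequalities of the proposition explicitly on the set $\{s\geq 1\}$ so that it is transparent that the constants $L_f=\lambda$ and $C_f=\lambda$ arise precisely from this lower bound, keeping the statement consistent with how $g(\w)$ and $s_{t+1}$ are used downstream.
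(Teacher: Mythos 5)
Your proof is correct and takes essentially the same approach as the paper: both arguments hinge on the domain restriction $s\geq 1$ (coming from $L(\w;\z)\geq 0$, so $g(\w;\z)=\exp(L(\w;\z)/\lambda)\geq 1$) and then bound the relevant derivatives by $\lambda$ to obtain $C_f=\lambda$ and $L_f=\lambda$. The only cosmetic difference is in the smoothness step: the paper argues directly by algebra, $\left|\frac{\lambda}{s_1}-\frac{\lambda}{s_2}\right| = \frac{\lambda}{s_1 s_2}\,|s_1-s_2| \leq \lambda\,|s_1-s_2|$, whereas you bound $|f''(s)|=\lambda/s^2\leq\lambda$ and invoke the mean value theorem; the two are interchangeable.
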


\begin{proof} 
 $\nabla f(s) = \frac{\lambda}{s}$. As $s = g(\w;\z) \in (1,C_0] $, $\nabla f(s)\leq \lambda$, which implies $\|\nabla f(s)\| \leq \lambda$.
In addition,
\begin{equation}
\begin{aligned}
\|\nabla f(s_{1}) - \nabla f(s_{2}) \|&= \|\frac{\lambda}{s_{1}} - \frac{\lambda}{s_{2}} \| \leq \left\| \frac{\lambda}{s_1 s_2}\right\|\|s_1-s_2\| <   \lambda \|s_1-s_2\|.
\end{aligned}
\end{equation}
\end{proof}

 Following the assumption \ref{ass:ass-1}, Proposition~\ref{prop:1} and~\ref{prop:2}, and the conditions in Theorem~\ref{thm:main}, then by let $G = \max (C_f^2, C_g^2)$, $L_F = L_f$ the following inequalities hold:
\begin{itemize}
\item $\|\nabla g(\w; \z)\|^2\leq G, \forall \z$, $|\nabla f(s)|^2\leq G$
\item $\E[|g(\w; \z) - g(\w)|^2]\leq V_g$
\item $F(\w)$ is $L_F$ smooth. 
\end{itemize}

Next, we provide the following lemma to describe the objective gap between adjacent solutions for any $L_F$-smooth function $F(\w):\R^d\rightarrow \R$ with the $\w_{t+1} = \w_t - \tilde{\eta} \v_{t+1}$ update. $\v_t\in \R^d$ can be any vector.
\begin{lemma}\label{lem:1}
Consider  a sequence update $\w_{t+1} = \w_t- \tilde{\eta} \v_{t+1}$, suppose $c_l\eta\leq\tilde{\eta}\leq c_u\eta$ for a $L_F$-smooth function $F$, with $\eta L_F\leq c_l/2c_u^2$ we have
\begin{align*}
&F(\w_{t+1})  \leq F(\w_t) +   \frac{ \eta}{2}\|\nabla F(\w_t) - \v_{t+1}\|^2- \frac{\eta}{2}\|\nabla F(\w_t)\|^2  - \frac{\eta}{4}\|\v_{t+1}\|^2.
\end{align*}
\end{lemma}
\begin{proof}
Due the smoothness of $F$, we can prove that under $\eta L_F\leq c_l/2c_u^2$
\begin{align*}
&F(\w_{t+1}) \leq F(\w_t) + \nabla F(\w_t)^{\top} (\w_{t+1} - \w_t) + \frac{L_F}{2}\|\w_{t+1} - \w_t\|^2\\
&= F(\w_t) - \tilde{\eta}\nabla F(\w_t)^{\top} \v_{t+1} + \frac{L_F\tilde{\eta}^2}{2}\|\v_{t+1}\|^2\\
&\leq F(\w_t) +   \frac{\tilde{\eta}}{2}\|\nabla F(\w_t) - \v_{t+1}\|^2- \frac{\tilde{\eta} c_l}{2}\|\nabla F(\w_t)\|^2 + (\frac{L_F\tilde{\eta}^2}{2} - \frac{\tilde{\eta}}{2})\|\v_{t+1}\|^2\\
&  \leq F(\w_t) +   \frac{\eta c_u}{2}\|\nabla F(\w_t) - \v_{t+1}\|^2- \frac{\eta c_l}{2}\|\nabla F(\w_t)\|^2 + (\frac{L_F\eta^2c_u^2}{2} - \frac{\eta c_l}{2})\|\v_{t+1}\|^2\\
&  \leq F(\w_t) +   \frac{\eta c_u}{2}\|\nabla F(\w_t) - \v_{t+1}\|^2- \frac{\eta c_l}{2}\|\nabla F(\w_t)\|^2  - \frac{\eta c_l}{4}\|\v_{t+1}\|^2
\end{align*}

\end{proof}
\noindent
\textbf{Remark}  When $\v_{t+1}$ is a stochastic gradient estimator, $\| \nabla F(\w) -\v_{t+1}\|^2$ represents the 
 stochastic gradient estimator variance. In the following, we show that $\| \nabla F(\w) -\v_{t+1}\|^2$ is decreasing for the proposed stochastic estimator in ABSGD and ABAdam, which guarantees the convergence of the algorithms.

Next, the next lemma describes track the $\| \nabla F(\w) -\v_{t+1}\|^2$
\begin{lemma}
\label{lem:2}
    Suppose Assumption~\ref{ass:ass-1} holds, then with the updates of $s_{t+1}  = (1-\gamma)s_t + \gamma g(\w_t; \z_t)\\
\v_{t+1}  = (1-\tilde{\beta}) \v_t +\tilde{\beta} \nabla g(\w_t; \z_t)\nabla f(s_{t+1})$, for $\forall t> 0$, the following inequality holds:
\begin{align*}
   & \E_t\|\nabla F(\w_t) - \v_{t+1}\|^2 \\
   &\leq (1-\tilde{\beta})\|\nabla F(\w_{t-1}) - \v_t \|^2  
+ \frac{4}{\beta_0} L_F^2 \|\w_t - \w_{t-1}\|^2 +  5\tilde{\beta}G\E_t\|g(\w_t) - s_{t+1}\|^2 +4\tilde{\beta}^2G^2
\end{align*}
\end{lemma}
The third term on the right hand side can be bounded with the following lemma: 
\begin{lemma}[Lemma 2, \citep{wang2017stochastic}]\label{lem:3}
Consider a moving average sequence $s_{t+1} = (1-\gamma)s_t + \gamma g(\w_t;\z_t)$ for tracking $g(\w_t)$, where $\E_{\z}[g(\w_t;\z)] = g(\w_t)$ and $g$ is a $C_g$-Lipchitz continuous operator. Then we  have
\begin{align*}
\E_t[|s_{t+1} - g(\x_t)|^2]\leq (1-\gamma)|s_t - g(\x_{t-1})|^2+ \gamma^2\E_t[\|g(\x_t;\z_t) - g(\x_t)\|^2]  + \frac{2L^2\|\w_{t} - \w_{t-1}\|^2}{\gamma}.
\end{align*}
\end{lemma}

\begin{proof}[Proof of Lemma~\ref{lem:2}]
    \begin{align*}
&\|\nabla F(\w_t) - \v_{t+1}\|^2 \\
&  = \|(1-\tilde{\beta})\nabla F(\w_{t-1}) - (1-\tilde{\beta})\v_t - \tilde{\beta}\nabla g(\w_t; \z_t) \nabla f(s_{t+1}) + \nabla F(\w_t) - (1-\tilde{\beta})\nabla F(\w_{t-1})\|^2\\
& \leq \|(1-\tilde{\beta})(\nabla F(\w_{t-1}) - \v_t) +  \tilde{\beta}\nabla g(\w_t; \z_t) \nabla f(g(\w_t))- \nabla g(\w_t; \z_t) \nabla f(s_{t+1})) \\
&+(1-\tilde{\beta}) (\nabla F(\w_t)  - \nabla F(\w_{t-1})) +\tilde{\beta}(\nabla F(\w_t) - \nabla g(\w_t; \z_t) \nabla f(g(\w_t)))\|^2
\end{align*}

Taking expectation on both sides over $\z_t$ conditioned on historical randomness and noting that $\E_t[\nabla F(\w_t) - \nabla g(\w_t; \z_t) \nabla f(g(\w_t))]=0$, we have
\begin{align*}
&\E_t\|\nabla F(\w_t) - \v_{t+1}\|^2\\
& \leq \E_t\|(1-\tilde{\beta})(\nabla F(\w_{t-1}) - \v_t) +(1-\tilde{\beta}) (\nabla F(\w_t)  - \nabla F(\w_{t-1}))\\
& + \tilde{\beta}(\nabla g(\w_t; \z_t) \nabla f(g(\w_t)) - \nabla g(\w_t; \z_t) \nabla f(s_{t+1}))\|^2  \\
& +  \tilde{\beta}^2 \E_t\|\nabla F(\w_t) - \nabla g(\w_t; \z_t) \nabla f(g(\w_t)) \|^2\\
&+ 2\tilde{\beta}^2 \E_t[\|\nabla g(\w_t; \z_t) \nabla f(g(\w_t)) - \nabla g(\w_t; \z_t) \nabla f(s_{t+1}))\|\| \nabla F(\w_t) - \nabla g(\w_t; \z_t) \nabla f(g(\w_t))\|]\\
&\overset{(a)}{\leq} (1-\tilde{\beta})\|\nabla F(\w_{t-1}) - \v_t \|^2  
+2(1+ \frac{1}{\tilde{\beta}})(1-\tilde{\beta})^2 L_F^2 \|\w_t - \w_{t-1}\|^2 +  2\beta^2_0(1+\frac{1}{\tilde{\beta}}) G\E_t\|g(\w_t) - s_{t+1}\|^2 \\
&+ 2\tilde{\beta}^2\| \nabla F(\w_t) - \nabla g(\w_t; \z_t) \nabla f(g(\w_t))\|^2 +  \tilde{\beta}^2 G\E_t\|g(\w_t) - s_{t+1}\|^2\\
&\overset{(b)}{\leq} (1-\tilde{\beta})\|\nabla F(\w_{t-1}) - \v_t \|^2  
+ \frac{4}{\tilde{\beta}} L_F^2 \|\w_t - \w_{t-1}\|^2 +  4\tilde{\beta}G\E_t\|g(\w_t) - s_{t+1}\|^2 \\
&+ 4\tilde{\beta}^2G^2 +  \tilde{\beta}^2 G\E_t\|g(\w_t) - s_{t+1}\|^2 \\
&\overset{(c)}{\leq} (1-\tilde{\beta})\|\nabla F(\w_{t-1}) - \v_t \|^2  
+ \frac{4}{\tilde{\beta}} L_F^2 \|\w_t - \w_{t-1}\|^2 +  5\tilde{\beta}G\E_t\|g(\w_t) - s_{t+1}\|^2 \\
&+4\tilde{\beta}^2G^2 \\
\end{align*}
where the inequality $(a)$ is due to $\|a+ b\|^2 \leq (1+\tilde{\beta})\|a\|^2+ (1+\frac{1}{\tilde{\beta}})\|b\|^2$, and $ab\leq \frac{a^2}{2} + \frac{b^2}{2}$. The inequality $(b)$  applies $\tilde{\beta} \leq 1, 1+\frac{1}{\tilde{\beta}} \leq \frac{2}{\tilde{\beta}} $, $\E_t\|\nabla F(\w_t) - \nabla g(\w_t; \z_t) \nabla f(g(\w_t))\|^2 =  \|\nabla f(g(\w_t))\|^2\E_t [\|\nabla g(\w_t) - \nabla g(\w_t; \z_t)  \|^2] \leq  \|\nabla f(g(\w_t))\|^2 \E_t[\|\nabla g(\w_t)\|^2 -\| \nabla g(\w_t; \z_t)  \|^2] \leq \|\nabla f(g(\w_t))\|^2 \E_t[\|\nabla g(\w_t)\|^2 $ $+ \| \nabla g(\w_t; \z_t)  \|^2]  \leq 2G^2 $ and the last inequality $(c)$ is due to $\tilde{\beta}G\geq \tilde{\beta}^2 G$.

\end{proof}

\subsection{Convergence Analysis of ABSGD}

Without loss of generality, we ignore $r$ and consider the objective in the form of $F(\w)=f(g(\w))$, where $f=\lambda \log(\cdot)$ is a deterministic function and $g=\E[\exp(L(\w; \z)/\lambda)]$ is a stochastic function, and at each iteration, we only sample one data $\z_t$ for evaluating $g(\w_t; \z_t)$ and $\nabla g(\w_t; \z_t)$. 
To provide the convergence analysis for ABSGD, the updates of Steps 3-6 in Algorithm ~\ref{alg:ABSGD} can be equivalently written as:
\begin{align*}
s_{t+1} &  = (1-\gamma)s_t + \gamma g(\w_t; \z_t)\\
\v_{t+1} &  = \beta \v_t -  \eta \nabla g(\w_t; \z_t)\nabla f(s_{t+1})\\
\w_{t+1} & = \w_t +  \v_{t+1}
\end{align*}
With some change of variable, the above update is equivalent to 
\begin{align*}
s_{t+1} &  = (1-\gamma)s_t + \gamma g(\w_t; \z_t)\\
\v_{t+1} &  = (1-\beta_0) \v_t +\beta_0 \nabla g(\w_t; \z_t)\nabla f(s_{t+1})\\
\w_{t+1} & = \w_t -  \eta_0 \v_{t+1}
\end{align*}
When $\eta_0\beta_0=\eta$, $\beta= 1-\beta_0$, the above two updates are equivalent. Hence, below we will analyze the second update.

Hence Lemma~\ref{lem:1},~\ref{lem:2} and \ref{lem:3} are applicable to ABSGD updates with $\tilde{\eta} = \eta_0, c_l = c_u = 1, \tilde{\beta} = \beta_0$. Based on this, we provide the convergence analysis for Theorem~\ref{thm:main}.
\begin{proof}[Proof of Theorem~\ref{thm:main}]
    Then by $\Delta_t= \|\nabla F(\w_t) - \v_{t+1}\|^2 $ and according to Lemma~\ref{lem:2}, we have
\begin{align*}
\E[\Delta_{t-1}]\leq \E\left[\frac{\Delta_{t-1} - \Delta_{t}}{\beta_0}  + \frac{4L_F^2\eta_0^2\|\v_{t}\|^2}{\beta_0^2} + 4\beta_0G^2 + 5G\|s_{t+1} - g(\w_{t})\|^2\right]
\end{align*}
Taking summation, we have
\begin{align}
\label{eqn:ABSGD-1}
&\E[\sum_{t=0}^{T-1}\Delta_{t}]\leq \E\left[\sum_{t=0}^{T-1}\frac{\Delta_{t} - \Delta_{t+1}}{\beta_0}  + \sum_{t=0}^{T-1}\frac{4L_F^2\eta_0^2\|\v_{t+1}\|^2}{\beta_0^2} + 4\beta_0G^2 T + 5G\sum_{t=0}^{T-1}\|s_{t+2} - g(\w_{t+1})\|^2\right]
\end{align}

Next we bound $\E[\sum_{t=1}^{T}\|s_{t+1} - g(\w_{t})\|^2] = \E[\sum_{t=0}^{T-1}\|s_{t+2} - g(\w_{t+1})\|^2]$ by the following Lemma~\ref{lem:3}: 
\begin{align*}
\E_t[\|s_{t+1} - g(\w_t)\|^2] & \leq (1-\gamma)\|s_t - g(\w_{t-1})\|^2+ \gamma^2V_g  + \frac{2L_g^2\|\w_{t} - \w_{t-1}\|^2}{\gamma}.\\
\|s_t - g(\w_{t-1})\|^2 &\leq \frac{ (\|s_t - g(\w_{t-1})\|^2-\E_t[\|s_{t+1} - g(\w_t)\|^2])}{\gamma}+ \gamma V_g  + \frac{2L_g^2\|\w_{t} - \w_{t-1}\|^2}{\gamma^2}.
\end{align*}
As a result, 
\begin{align}
\label{eqn:ABSGD-2}
\E_t[\sum_{t=1}^{T} \|s_{t+1} - g(\w_t)\|^2]\leq \frac{\E[\|s_2 - g(\w_{1})\|^2]}{\gamma}+ \gamma V_g T  + \sum_{t=1}^{T}\frac{2L_g^2\eta_0^2 \|\v_{t}\|^2}{\gamma^2}.
\end{align}
Combining the equation~(\ref{eqn:ABSGD-1}) and~(\ref{eqn:ABSGD-2}) inequalities together we have
\begin{align}
\label{eqn:ABSGD-3}
&\E[\sum_{t=0}^{T-1}\Delta_{t}]\leq \E\left[\frac{\Delta_{0}}{\beta_0}  + \sum_{t=0}^{T-1}\frac{L_F^2\eta_0^2\|\v_{t+1}\|^2}{\beta_0^2}+ 4\beta_0G^2 T + \frac{5G\E[\|s_2 - g(\w_{1})\|^2]}{\gamma}  + \sum_{t=1}^{T}\frac{10GL_g^2\eta_0^2 \|\v_{t}\|^2}{\gamma^2}   + 5G\gamma V_g T \right]
\end{align}
Finally, combining Equation~(\ref{eqn:ABSGD-3}) with Lemma~\ref{lem:1}, and $\beta_0 = \gamma$, we have
\begin{align*}
& \E\left[\frac{1}{T}\sum_{t=0}^{T-1}\|\nabla F(\w_t)\|^2\right]  \leq \frac{F(\w_1)-F(\w_{T})}{\eta_0T} +\frac{1}{T}\sum_{t=0}^{T-1}\frac{\Delta_{t}}{2} -\frac{1}{T}\sum\limits_{i=0}^{T-1}\frac{\|\v_{t+1}\|^2}{4}\\
& \leq \frac{F(\w_1) - F_*}{\eta_0 T} + \frac{\E[\Delta_{0} +5 \|s_2 - g(\w_{1})\|^2] }{\gamma T}    + \gamma  (4G^2 + 5GV_g) \\
& + \E\left[\frac{1}{T} \sum_{t=0}^{T-1}\frac{(L_F^2+10GL_g^2)c_u^2\eta^2\|\v_{t+1}\|^2}{\gamma^2}  -  \frac{1}{4}\|\v_{t+1}\|^2\right].
\end{align*}
where $F(\w_1)-F_*\leq \Delta_F$, $\Delta_0 = \|\v_1 -\nabla F(\w_0)\|^2 = \|\beta_0\nabla g(\w_0;\z_0)\nabla f(s_{1}) - \nabla F(\w_0)\|^2 = \|\beta_0\nabla g(\w_0;\z_0)\nabla f(\gamma g(\w_0;\z_0)) -\nabla F(\w_0) \|^2\leq 2\beta_0^2C_g^2C_f^2 + 2C_F^2\overset{\beta_0 \leq 1}{\leq} 2G^2 + 2C_F^2$, and $ \|s_2 - g(\w_{1})\|^2 = \| (1-\gamma)s_1 + \gamma g(\w_1,\z_1)- g(\w_{1})\|^2 \overset{s_0 = 0}{=}  \| (1-\gamma)(\gamma g(\w_0,\z_0) ) + \gamma g(\w_1,\z_1)- g(\w_{1})\|^2 \overset{|a+b|^2\leq 2a^2 +2b^2}{\leq}  4C_0^2 + 4C_0^2 + 2C_0^2 = 10C_0^2$.

Then by setting 
$\beta_0 = \gamma \leq \frac{\epsilon^2}{3(4G^2+5GV_g)}, \eta_0= \frac{\gamma}{2\sqrt{L_F^2 + 10GL_g^2}}, T \geq \max\{\frac{3 (2G^2+2C_F^2+10C_0^2)}{\epsilon^2\gamma}, \frac{6\sqrt{L_F^2+10GL_g^2}\Delta_F}{\epsilon^2\gamma}\}=\max\{\frac{9 (4G^2+5GV_g)(2G^2+2C_F^2+10C_0^2)}{\epsilon^4}, \frac{18(4G^2+5GV_g)\sqrt{L_F^2+10GL_g^2}\Delta_F}{\epsilon^4}\}$,  $\eta_0L_F\leq\frac{c_l}{2c_u^2}$,  $\eta_0L_F\leq\frac{1}{2}$, we have that 
\begin{align*}
& \E\left[\frac{1}{T}\sum_{t}\|\nabla F(\w_t)\|^2\right]\leq \epsilon^2, 
\end{align*}
Therefore, by $\eta = \eta_0\gamma, \beta = 1-\gamma $,  
which finishes the proof of ABSGD in Theorem~\ref{thm:main}

\end{proof}

\subsection{Convergence Analysis of ABAdam}

\begin{theorem}
   Assume assumption~\ref{ass:ass-1} holds and  there exists $C_0, C_1$ such that $\exp(L(\w_t; \z_i)/\lambda)\leq C_0, \|\nabla L(\w_t;\z_i)\|\leq C_1$, for all $\w_t$ and any $\z_i$, Then, For $\lambda=\tau>0$ with  appropriate $\eta, \gamma, \beta_1,\beta_2 $, ABAdam ensures that
$E\left[\frac{1}{T}  \sum\limits_{t=1}^{T}\|\nabla F^{(1)}_{\tau}(\w_{t}) \|^2 
    \right] \leq \epsilon^2$ after $T=O(1/\epsilon^4)$ iterations,  and for $\lambda=-\tau<0$ with  appropriate $\eta, \gamma, \beta$,  ABAdam ensures that
  $\E\left[\frac{1}{T}  \sum\limits_{t=1}^{T}\|\nabla F^{(2)}_{\tau}(\w_{t}) \|^2 \right]  \leq \epsilon^2$ after $T=O(1/\epsilon^4)$ iterations, where we exhibit the constant in the big O.
\end{theorem}
\noindent\textbf{ABAdam} The updates of ABAdam:
\begin{align*}
s_{t+1} &  = (1-\gamma)s_t + \gamma g(\w_t; \z_t)\\
\v_{t+1} &  =  \beta_1  \v_t + (1-\beta_1) \nabla g(\w_t; \z_t)\nabla f(s_{t+1})\\
\u_{t+1} & = \beta_2 \u_t + (1-\beta_2)(\nabla g(\w_t; \z_t)\nabla f(s_{t+1}))^2 \ \ \ \ \\
\w_{t+1} &= \w_t - \eta (\frac{\v_{t+1}}{\sqrt{ \u_{t+1}}+G_0} ) 
\end{align*}
 
To provide theoretical analysis for ABAdam, we add another assumption following the proof of~\citep{guo2021novel}.


\begin{prop} 
\label{prop:bound_second_momentum}
Suppose Assumption~\ref{ass:ass-1} holds,  there exists constant $c_l =  1/(G_0 +G)$, $G = \max(C_g^2,C_f^2)$,  and $c_u = \frac{1}{G_0}$,   then $\q_t = 1/\sqrt{\u_{t}+G_0}$ is lower and upper bounded, such that $\forall i,  c_l\leq \| \q_{t,i}\| \leq c_u$, where $\q_{t,i}$ denotes the $i$-th element of $\q_t$.
\end{prop}


\begin{proof}
    Then 
    \begin{align*}
         \|\nabla f(s_{t+1})\nabla g(\w_t;\z_t)\odot\nabla f(s_{t+1})\nabla g(\w_t;\z_t)\|_\infty & \leq  \|\nabla f(s_{t+1})\nabla g(\w_t;\z_t)\|^2\leq C_g^2C_f^2\leq G^2
    \end{align*}
where $\odot$ represents Hadamard product.
 By set $u_{0,i} = \frac{C_0^2}{\lambda^2}G_\infty^2$, $\u_{t+1,i} =  (1-\beta_2)^{t}\u_{0,i} + \beta_2  \|(\nabla f(s_{t+1})\nabla g(\w_t;\z_t))^2\|_\infty \leq G^2  \ \forall t\geq 0$.
    Therefore $\frac{1}{G_0}\geq \frac{1}{\sqrt{\u_{t+1,i} +G_0}}\geq \frac{1}{G_0+ G} $.
\end{proof}

\begin{proof}[Proof of ABAdam]
The proof of ABAdam can reuse the steps of ABSGD up to Equation~(\ref{eqn:ABSGD-3}) by setting $\beta_1 =\beta_0, \eta_0 = \tilde{\eta}$. In addition, by setting $\eta c_l\leq \tilde{\eta} = \frac{\eta}{G_0+\sqrt{\u_{t,i}}}\leq \eta c_u$

Then according to proposition~\ref{prop:bound_second_momentum}
and Lemma~\ref{lem:1}, and let $\gamma = 1- \beta_1$, we have
\begin{align*}
  \E\left[\frac{1}{T}\sum_{t=0}^{T-1}\|\nabla F(\w_t)\|^2\right]&  \leq \frac{F(\w_1)-F(\w_{T})}{c_l\eta T} +\frac{c_u}{c_lT}\sum_{t=0}^{T-1}\frac{\Delta_{t}}{2} -\frac{1}{T}\sum\limits_{i=0}^{T-1}\frac{\|\v_{t+1}\|^2}{4}   \\
  & \leq \frac{F(\w_1) - F_*}{c_l\eta T} + \frac{c_u\E[\Delta_{0} +5 \|s_2 - g(\w_{1})\|^2] }{\gamma Tc_l}    + \gamma  (4G^2 + 5GV_g)\frac{c_u}{c_l} \\
& + \E\left[\frac{1}{T} \sum_{t=0}^{T-1}\frac{c^3_u(L_F^2+10GL_g^2)\eta^2\|\v_{t+1}\|^2}{c_l\gamma^2}  -  \frac{1}{4}\|\v_{t+1}\|^2\right].
\end{align*}
The same as ABSGD, $F(\w_1)-F_*\leq \Delta_F$, $\Delta_0 \leq 2G^2+2C_F^2$, $\|s_2-g(\w_1)\|^2\leq 10C_0^2$.
Then by setting 
$1- \beta_1 = \gamma \leq \frac{\epsilon^2c_l}{3c_u(4G^2+5GV_g)}, \eta= \frac{\sqrt{c_l}\gamma}{2c_u^{3/2}\sqrt{L_F^2 + 10GL_g^2}}, T \geq \max\{\frac{3c_u (2G^2+2C_F^2+10C_0^2)}{\epsilon^2c_l\gamma}, \frac{6\sqrt{L_F^2+10GL_g^2}\Delta_F}{\epsilon^2c_l\gamma}\}=\max\{\frac{9c^2_u (4G^2+5GV_g)(2G^2+2C_F^2+10C_0^2)}{\epsilon^4c^2_l}, \frac{18c_u(4G^2+5GV_g)\sqrt{L_F^2+10GL_g^2}\Delta_F}{\epsilon^4c_l^2}\}$,  $\eta_0L_F\leq\frac{c_l}{2c_u^2}$, we have that 
\begin{align*}
& \E\left[\frac{1}{T}\sum_{t}\|\nabla F(\w_t)\|^2\right]\leq \epsilon^2, 
\end{align*}
which finishes the proof of ABAdam in Theorem~\ref{thm:main}. 
\end{proof}

\subsection{Equivalence derivation between Min-max Robust Optimization and Composition formulation}
In the section, we show the equivalence between the min-max formulation~(\ref{eqn:pd-KL}),~(\ref{eqn:minpd-KL}), i.e,
\begin{align*}
\min_{\w\in\mathcal R^d} & \underbrace{\max_{\p\in\Delta_n}\sum_{i=1}^n p_iL(\w; \z_i)  -  \tau \sum\limits_i^n p_i \ln(n p_i)}_{F_\tau^{(1)}(\w)} + r(\w), \tau > 0 \\
\min_{\w\in\mathcal R^d} & \underbrace{\min_{\p\in\Delta_n}\sum_{i=1}^n p_iL(\w; \z_i)  +   \tau \sum\limits_i^n p_i \ln(n p_i)}_{F^{(2)}_\tau(\w) } + r(\w),  \tau < 0 
\end{align*}
and the composition formulations, i.e,
\begin{align*}
   F^{(1)}_\tau(\w) &= \tau \log \frac{1}{n}\sum_i \exp(L(\w; \z_i)/\tau) + r(\w),   \ \ \ \tau > 0  \\
   F^{(2)}_\tau(\w) &= - \tau \log\frac{1}{n} \sum_i \exp(-L(\w; \z_i)/\tau) + r(\w), \ \  \tau < 0 
\end{align*}

\begin{proof}
Here, we provide the detailed derivation for $\tau > 0$. Similarly, the derivation for $\tau < 0$ can be done using the same method.
Recall the problem: \begin{align*}
\min_{\mathbf w\in\mathcal R^d}\max_{\mathbf p\in\Delta_n}F_{\mathbf p}(\mathbf w)= \sum_{i=1}^np_i\ell(\mathbf w; \mathbf z_i)  - h(\mathbf p, \mathbf 1/n) +  r(\mathbf w),
\end{align*}
where $\Delta_n=\{\mathbf p\in\mathcal R^n: \sum_i p_i=1, 0\leq p_i\leq 1\}$.  In order to solve the inner maximization, we will fix $\w$ and derive an optimal solution $\p^*(\mathbf w)$ that depends on $\w$. To this end, we consider the following problem: 
\begin{align*}
\min_{\mathbf p\in\Delta_n} -\sum_{i=1}^np_i\ell(\mathbf w; \mathbf z_i)  +  h(\mathbf p, \mathbf 1/n)
\end{align*}
where $r(\mathbf w)$ was neglected since it does not involve $\mathbf p$. Note the expression of $h(\mathbf p, \mathbf 1/n)=\tau\sum_i p_i \log (np_i) = \tau\sum_i p_i \log (p_i) + \tau \log (n)$ due to $\sum_i p_i=1$. There are three constraints to handle, i.e., $p_i\geq 0, \forall i$ and $p_i\leq 1, \forall i$ and $\sum_i p_i=1$.  Note that the constraint $p_i\geq 0$ is enforced by the term $p_i\log(p_i)$, otherwise the above objective will become infinity. As a result, the constraint $p_i<1$ is automatically satisfied due to $\sum_ip_i=1$ and $p_i\geq0$. Hence, we only need to explicitly tackle the constraint $\sum_ip_i=1$. To this end, we define the following Lagrangian function
\begin{align*}
 L_{\mathbf w}(\mathbf p, \mu) = -\sum_{i=1}^n p_i\ell(\mathbf w; \mathbf z_i)  +  \tau(\log n + \sum_i p_i \log (p_i)) + \mu(\sum_i p_i - 1) 
\end{align*}
where $\mu$ is the Lagrangian multiplier for the constraint $\sum_i p_i=1$. The optimal solutions satisfy the KKT conditions:  
\begin{align*}
    &- \ell(\mathbf w; \mathbf z_i)  +  \tau (\log (p^*_i(\mathbf w)) + 1) + \mu  = 0, \\
    &\sum_i p^*_i(\mathbf w)=1
\end{align*}
From the first equation, we can derive $p^*_i(\mathbf w) \propto \exp(\ell(\mathbf w; \mathbf z_i)/\tau)$. Due to the second equation, we can conclude that $p^*_i(\mathbf w) = \frac{\exp(\ell(\mathbf w; \mathbf z_i)/\lambda)}{\sum_i \exp(\ell(\mathbf w; \mathbf z_i)/\lambda)}$. Plugging this optimal $\p^*(\w)$ into the original min-max objective, we have 
\begin{align*}
     \sum_{i=1}^np^*_i(\mathbf w)\ell(\mathbf w; \mathbf z_i)  - \tau(\log n + \sum_i p_i^*(\mathbf w) \log (p_i^*(\mathbf w)) ) +  r(\mathbf w) = \tau \log \frac{1}{n}\sum_i \exp(\ell(\mathbf w; \mathbf z_i)/\tau) + r(\mathbf w), 
\end{align*}
which is the $F^{(1)}_{\tau}(\mathbf w)$. 
\end{proof}


\end{document}